\newcommand\reals{\mathbb{R}} 
\newcommand\naturals{\mathbb{N}} 
\newcommand\integers{\mathbb{Z}} 
\newcommand\E{\mathbb{E}} 
\newcommand\prob{\mathbb{P}} 
\DeclareMathOperator*{\argmax}{argmax} 
\newcommand\ind{\mathbbm{1}}  
\newcommand\cost{\textbf{C}} 
\newcommand\normcost{\textbf{D}} 
\newcommand\costseor{\mathcal{C}^{eor}} 
\newcommand\reducedcostseor{\mathcal{C}^{eor}_1} 
\newcommand\weakspace{\mathcal{H}} 
\newcommand\unifv{\textbf{u}} 
\newcommand\unifm{\textbf{U}} 
\newcommand\samplex{\textbf{x}} 
\newcommand\stdv{\textbf{e}} 
\newcommand\cumv{\textbf{s}} 
\newcommand\weightv{\textbf{w}} 
\newcommand\predy{\hat y} 
\newcommand\potential{\phi} 
\newcommand\probv{\textbf{p}} 
\newcommand\zerov{\textbf{0}} 
\newtheorem{theorem}{Theorem}
\newtheorem{definition}[theorem]{Definition}
\newtheorem{lemma}[theorem]{Lemma}
\newtheorem{corollary}[theorem]{Corollary}
\newtheorem*{remark}{Remark}
\title{Online Multiclass Boosting}
\author{
  Young Hun Jung\\
  \and
  Jack Goetz\\
  Department of Statistics\\
  University of Michigan\\
  Ann Arbor, MI 48109 \\
  \texttt{\{yhjung, jrgoetz, tewaria\}@umich.edu} \\
  \and
  Ambuj Tewari
}
\begin{document}

\maketitle

\begin{abstract}
Recent work has extended the theoretical analysis of boosting algorithms to multiclass problems and to online settings. However, the multiclass extension is in the batch setting and the online extensions only consider binary classification. We fill this gap in the literature by defining, and justifying, a weak learning condition for online multiclass boosting. This condition leads to an optimal boosting algorithm that requires the minimal number of weak learners to achieve a certain accuracy. Additionally, we propose an adaptive algorithm which is near optimal and enjoys an excellent performance on real data due to its adaptive property. 
\end{abstract} 

\section{Introduction}
\textit{Boosting} methods are a ensemble learning methods that aggregate several (not necessarily) weak learners to build a stronger learner. When used to aggregate reasonably strong learners, boosting has been shown to produce results competitive with other state-of-the-art methods (e.g., \citet{korytkowski2016fast}, \citet{zhang2014boosted}). Until recently theoretical development in this area has been focused on batch binary settings where the learner can observe the entire training set at once, and the labels are restricted to be binary (cf. \citet{schapire2012boosting}). In the past few years, progress has been made to extend the theory and algorithms to more general settings.

Dealing with \textit{multiclass classification} turned out to be more subtle than initially expected. \citet{mukherjee2013theory} unify several different proposals made earlier in the literature and provide a general framework for multiclass boosting. They state their weak learning conditions in terms of \textit{cost matrices} that have to satisfy certain restrictions: for example, labeling with the ground truth should have less cost than labeling with some other labels. A weak learning condition, just like the binary condition, states that the performance of a learner, now judged using a cost matrix, should be better than a random guessing baseline. One particular condition they call the \textit{edge-over-random} condition, proves to be sufficient for boostability. The edge-over-random condition will also figure prominently in this paper. They also consider a necessary and sufficient condition for boostability but it turns out to be computationally intractable to be used in practice.

A recent trend in modern machine learning is to train learners in an \textit{online setting} where the instances come sequentially and the learner has to make predictions instantly. \citet{oza2005online} initially proposed an online boosting algorithm that has accuracy comparable with the batch version, but it took several years to design an algorithm with theoretical justification (\citet{chen2012online}). \citet{beygelzimer2015optimal} achieved a breakthrough by proposing an optimal algorithm in online binary settings and an adaptive algorithm that works quite well in practice. These theories in online binary boosting have led to several extensions. For example, \citet{chen2014boosting} combine one vs all method with binary boosting algorithms to tackle online multiclass problems with bandit feedback, and \citet{hu2017gradient} build a theory of boosting in regression setting.

In this paper, we combine the insights and techniques of \citet{mukherjee2013theory} and \citet{beygelzimer2015optimal} to provide a framework for online multiclass boosting. The cost matrix framework from the former work is adopted to propose an online weak learning condition that defines how well a learner can perform over a random guess (Definition \ref{def:onlineWLC}). We show this condition is naturally derived from its batch setting counterpart. From this weak learning condition, a boosting algorithm  (Algorithm \ref{alg:onlineMBBM}) is proposed which is theoretically optimal in that it requires the minimal number of learners and sample complexity to attain a specified level of accuracy. We also develop an adaptive algorithm (Algorithm \ref{alg:AdaboostOLM}) which allows learners to have variable strengths. This algorithm is theoretically less efficient than the optimal one, but the experimental results show that it is quite comparable and sometimes even better due to its adaptive property. Both algorithms not only possess theoretical proofs of mistake bounds, but also demonstrate superior performance over preexisting methods. 

\section{Preliminaries}
We first describe the basic setup for online boosting. While in the batch setting, an additional weak learner is trained at every iteration, in the online setting, the algorithm starts with a fixed count of $N$ \textit{weak learners} and a \textit{booster} which manages the weak learners. There are $k$ possible labels $[k] := \{1, \cdots, k\}$ and $k$ is known to the learners. At each iteration $t=1, \cdots, T$, an \textit{adversary} picks a labeled example $(\samplex_{t}, y_{t}) \in \mathcal{X} \times [k]$, where $\mathcal{X}$ is some domain, and reveals $\samplex_{t}$ to the booster. Once the booster observes the unlabeled data $\samplex_{t}$, it gathers the weak learners' predictions and makes a final prediction. Throughout this paper, index $i$ takes values from $1$ to $N$; $t$ from 1 to $T$; and $l$ from 1 to $k$. 

We utilize the \textit{cost matrix framework}, first proposed by \citet{mukherjee2013theory}, to develop multiclass boosting algorithms. This is a key ingredient in the multiclass extension as it enables different penalization for each pair of correct label and prediction, and we further develop this framework to suit the online setting. The booster sequentially computes \textit{cost matrices} $\{\cost^{i}_{t} \in \reals^{k \times k}~|~i = 1, \cdots, N\}$, sends $(\samplex_{t}, \cost^{i}_{t})$ to the $i^{th}$ weak learner $WL^{i}$, and gets its prediction $l^{i}_{t} \in [k]$. Here the cost matrix $\cost^{i}_{t}$ plays a role of loss function in that $WL^{i}$ tries to minimize the cumulative cost $\sum_{t} \cost^{i}_{t}[y_{t}, l^{i}_{t}]$. As the booster wants each learner to predict the correct label, it wants to set the diagonal entries of $\cost^{i}_{t}$ to be minimal among its row. At this stage, the true label $y_{t}$ is not revealed yet, but the previous weak learners' predictions can affect the computation of the cost matrix for the next learner. Given a matrix $\cost$, the $(i, j)^{th}$ entry will be denoted by $\cost[i, j]$, and $i^{th}$ row vector by $\cost[i]$. 

Once all the learners make predictions, the booster makes the final prediction $\predy_{t}$ by majority votes. The booster can either take simple majority votes or weighted ones. In fact for the adaptive algorithm, we will allow weighted votes so that the booster can assign more weights on well-performing learners. The weight for $WL^{i}$ at iteration $t$ will be denoted by $\alpha^{i}_{t}$. After observing the booster's final decision, the adversary reveals the true label $y_{t}$, and the booster suffers 0-1 loss $\ind(\predy_{t} \neq y_{t})$. The booster also shares the true label to the weak learners so that they can train on this data point. 

Two main issues have to be resolved to design a good boosting algorithm. First, we need to design the booster's strategy for producing cost matrices. Second, we need to quantify weak learner's ability to reduce the cumulative cost $\sum_{t=1}^{T}\cost^{i}_{t}[y_{t}, l^{i}_{t}]$. The first issue will be resolved by introducing potential functions, which will be thoroughly discussed in Section \ref{section:generalOptimal}. For the second issue, we introduce our online weak learning condition, a generalization of the weak learning assumption in \citet{beygelzimer2015optimal}, stating that for any adaptively given sequence of cost matrices, weak learners can produce predictions whose cumulative cost is less than that incurred by random guessing. The online weak learning condition will be discussed in the following section. For the analysis of the adaptive algorithm, we use empirical edges instead of the online weak learning condition.

\subsection{Online weak learning condition}
\label{section:onlineWLC}

In this section, we propose an online weak learning condition that states the weak learners are better than a random guess. We first define a baseline condition that is better than a random guess. Let $\Delta [k]$ denote a family of distributions over $[k]$ and $\unifv^{l}_{\gamma} \in \Delta [k]$ be a uniform distribution that puts $\gamma$ more weight on the label $l$. For example, $\unifv^{1}_{\gamma} = (\frac{1-\gamma}{k}+\gamma, \frac{1-\gamma}{k}, \cdots, \frac{1-\gamma}{k})$. For a given sequence of examples $\{(\samplex_{t}, y_{t}) ~|~ t = 1, \cdots, T\}$, $\unifm_{\gamma}\in \reals^{T \times k}$ consists of rows $\unifv^{y_{t}}_{\gamma}$. Then we restrict the booster's choice of cost matrices to
\begin{equation*}
\reducedcostseor:= \{\cost \in \reals^{k\times k} ~|~ \forall l, r \in [k], ~\cost[l, l] = 0, \cost[l, r] \geq 0,  \text{ and } ||\cost[l]||_{1} = 1 \}.
\end{equation*}
Note that diagonal entries are minimal among the row, and $\reducedcostseor$ also has a normalization constraint. A broader choice of cost matrices is allowed if one can assign importance weights on observations, which is possible for various learners. Even if the learner does not take the importance weight as an input, we can achieve a similar effect by sending to the learner an instance with probability that is proportional to its weight. Interested readers can refer \citet[Lemma 1]{beygelzimer2015optimal}. From now on, we will assume that our weak learners can take weight $w_{t}$ as an input. 

We are ready to present our online weak learning condition. This condition is in fact naturally derived from the batch setting counterpart that is well studied by \citet{mukherjee2013theory}. The link is thoroughly discussed in Appendix \ref{appendix:linkWLC}. For the scaling issue, we assume the weights $w_{t}$ lie in $[0, 1]$. 

\begin{definition}{\bf(Online multiclass weak learning condition)}
\label{def:onlineWLC}
For parameters $\gamma, \delta \in (0, 1)$, and $S > 0$, a pair of online learner and an adversary is said to satisfy online weak learning condition with parameters $\delta, \gamma, \text{and }S$ if for any sample length $T$, any adaptive sequence of labeled examples, and for any adaptively chosen series of pairs of weight and cost matrix $\{(w_{t}, \cost_{t}) \in [0, 1] \times \reducedcostseor~|~t = 1, \cdots, T\}$, the learner can generate predictions $\predy_{t}$ such that with probability at least $1-\delta$, 
\begin{equation}
\label{eq:onlineWLC}
\sum_{t=1}^{T}w_{t}\cost_{t}[y_{t}, \predy_{t}] \leq \cost \bullet \unifm_{\gamma}'+S = \frac{1-\gamma}{k}||\weightv||_{1} + S,
\end{equation}
where $\cost \in \reals^{T \times k}$ consists of rows of $w_{t}\cost_{t}[y_{t}]$ and $\textbf{A} \bullet \textbf{B}'$ denotes the Frobenius inner product $\text{Tr}(\textbf{A} \textbf{B}')$. $\weightv = (w_{1}, \cdots, w_{T})$ and the last equality holds due to the normalized condition on $\reducedcostseor$. $\gamma$ is called an edge, and $S$ an excess loss. 
\end{definition}

\begin{remark}
	Notice that this condition is imposed on a pair of learner and adversary instead of solely on a learner. This is because no learner can satisfy this condition if the adversary draws samples in a completely adaptive manner. The probabilistic statement is necessary because many online algorithms' predictions are not deterministic. The excess loss requirement is needed since an online learner cannot produce meaningful predictions before observing a sufficient number of examples. 
\end{remark}

\section{An optimal algorithm}
In this section, we describe the booster's optimal strategy for designing cost matrices. We first introduce a general theory without specifying the loss, and later investigate the asymptotic behavior of cumulative loss suffered by our algorithm under the specific 0-1 loss. We adopt the potential function framework from \citet{mukherjee2013theory} and extend it to the online setting. Potential functions help both in designing cost matrices and in proving the mistake bound of the algorithm. 
\subsection{A general online multiclass boost-by-majority (OnlineMBBM) algorithm}
\label{section:generalOptimal}
We will keep track of the weighted cumulative votes of the first $i$ weak learners for the sample $\samplex_{t}$ by $\cumv^{i}_{t}:= \sum_{j=1}^{i}\alpha^{j}_{t} \stdv_{l^{j}_{t}}$, where $\alpha^{i}_{t}$ is the weight of $WL^{i}$, $l^{i}_{t}$ is its prediction and $\stdv_j$ is the $j^{th}$ standard basis vector. For the optimal algorithm, we assume that $\alpha^{i}_{t}=1,~ \forall i, t$. In other words, the booster makes the final decision by simple majority votes. Given a cumulative vote $\cumv \in \reals^{k}$, suppose we have a loss function $L^{r}(\cumv)$ where $r$ denotes the correct label. We call a loss function \textit{proper}, if it is a decreasing function of $\cumv[r]$ and an increasing function of other coordinates (we alert the reader that ``proper loss'' has at least one other meaning in the literature). From now on, we will assume that our loss function is proper. A good example of proper loss is multiclass 0-1 loss:
\begin{equation}
	\label{eq:zeroOne}
	L^{r}(\cumv) := \ind(\max_{l \neq r}\cumv[l] \geq \cumv[r]).
\end{equation}
The purpose of the potential function $\potential^{r}_{i}(\cumv)$ is to estimate the booster's loss when there remain $i$ learners until the final decision and the current cumulative vote is $\cumv$. More precisely, we want potential functions to satisfy the following conditions:
\begin{align}
	\begin{split}
		\label{eq:potentialCond}
		\potential^{r}_{0}(\cumv) &= L^{r}(\cumv), \\
		\potential^{r}_{i+1}(\cumv) &= \E_{l \sim \unifv^{r}_{\gamma}}\potential^{r}_{i}(\cumv +\stdv_{l}).
	\end{split}
\end{align}
Readers should note that $\potential^{r}_{i}(\cumv)$ also inherits the proper property of the loss function, which can be shown by induction. The condition (\ref{eq:potentialCond}) can be loosened by replacing both equalities by inequalities ``$\geq$'', but in practice we usually use equalities.

Now we describe the booster's strategy for designing cost matrices. After observing $\samplex_{t}$, the booster sequentially sets a cost matrix $\cost^{i}_{t}$ for $WL^{i}$, gets the weak learner's prediction $l^{i}_{t}$ and uses this in the computation of the next cost matrix $\cost^{i+1}_{t}$. Ultimately, booster wants to set 
\begin{equation}
	\label{eq:costMatrix}
	\cost^{i}_{t}[r, l] = \potential^{r}_{N-i}(\cumv^{i-1}_{t} + \stdv_{l}).
\end{equation}
However, this cost matrix does not satisfy the condition of $\reducedcostseor$, and thus should be modified in order to utilize the weak learning condition. First to make the cost for the true label equal to $0$, we subtract $\cost^{i}_{t}[r, r]$ from every element of $\cost^{i}_{t}[r]$. Since the potential function is proper, our new cost matrix still has non-negative elements after the subtraction. We then normalize the row so that each row has $\ell_1$ norm equal to $1$. In other words, we get new normalized cost matrix
\begin{equation}
	\label{eq:normalizedCostMatrix}
	\normcost^{i}_{t}[r, l] = \frac{\potential^r_{N-i} (\cumv^{i-1}_t + \stdv_l) - \potential^r_{N-i} (\cumv^{i-1}_t + \stdv_r)}{\weightv^{i}[t]},
\end{equation}
where $\weightv^{i}[t] := \sum_{l=1}^{k}\potential^r_{N-i} (\cumv^{i-1}_t + \stdv_l) - \potential^r_{N-i} (\cumv^{i-1}_t + \stdv_r)$ plays the role of weight. It is still possible that a row vector $\cost^{i}_{t}[r]$ is a zero vector so that normalization is impossible. In this case, we just leave it as a zero vector. Our weak learning condition (\ref{eq:onlineWLC}) still works with cost matrices some of whose row vectors are zeros because however the learner predicts, it incurs no cost. 

\begin{algorithm}[t!]
	\begin{algorithmic}[1]
		\FOR{$t=1, \cdots, T$}
		\STATE Receive example $\samplex_t$
		\STATE Set $\cumv^0_t = \textbf{0} \in \reals^k$
		\FOR{$i = 1, \cdots, N$}
		\STATE Set the normalized cost matrix $\normcost^{i}_{t}$ according to (\ref{eq:normalizedCostMatrix}) and pass it to $WL^{i}$
		\STATE Get weak predictions $l^i_t = WL^i(\samplex_t)$ and update $\cumv^i_t = \cumv^{i-1}_t + \stdv_{l^i_t}$
		\ENDFOR
		\STATE Predict $\predy_t := \argmax_l \cumv^N_t[l]$ and receive true label $y_t$
		\FOR{$i = 1, \cdots, N$}
		\STATE Set $\weightv^i[t] = \sum_{l=1}^{k}[\potential^{y_{t}}_{N-i} (\cumv^{i-1}_t + \stdv_l) - \potential^{y_{t}}_{N-i} (\cumv^{i-1}_t + \stdv_{y_{t}})]$
		\STATE Pass training example with weight $(\samplex_t, y_t, \weightv^{i}[t])$ to $WL^i$ 
		\ENDFOR
		\ENDFOR
	\end{algorithmic}
	\caption{Online Multiclass Boost-by-Majority (OnlineMBBM)}
	\label{alg:onlineMBBM}
\end{algorithm}

After defining cost matrices, the rest of the algorithm is straightforward except we have to estimate $||\weightv^{i}||_{\infty}$ to normalize the weight. This is necessary because the weak learning condition assumes the weights lying in $[0, 1]$. We cannot compute the exact value of $||\weightv^{i}||_{\infty}$ until the last instance is revealed, which is fine as we need this value only in proving the mistake bound. The estimate $w^{i*}$ for $||\weightv^{i}||_{\infty}$ requires to specify the loss, and we postpone the technical parts to Appendix \ref{appendix:mistakeSpecific}. Interested readers may directly refer Lemma \ref{lemma:weightBound} before proceeding. Once the learners generate predictions after observing cost matrices, the final decision is made by simple majority votes. After the true label is revealed, the booster updates the weight and sends the labeled instance with weight to the weak learners. The pseudocode for the entire algorithm is depicted in Algorithm \ref{alg:onlineMBBM}. The algorithm is named after \citet[OnlineBBM]{beygelzimer2015optimal}, which is in fact OnlineMBBM with binary labels.

We present our first main result regarding the mistake bound of general OnlineMBBM. The proof appears in Appendix \ref{appendix:mistakeOptimal}  where the main idea is adopted from \citet[Lemma 3]{beygelzimer2015optimal}. 

\begin{theorem}{\bf{(Cumulative loss bound for OnlineMBBM)}}
	\label{thm:mistakeOptimal}
	Suppose weak learners and an adversary satisfy the online weak learning condition (\ref{eq:onlineWLC}) with parameters $\delta, \gamma, \text{and } S$. For any $T$ and $N$ satisfying $\delta \ll \frac{1}{N}$, and any adaptive sequence of labeled examples generated by the adversary, the final loss suffered by OnlineMBBM satisfies the following inequality with probability $1 - N\delta$:
	\begin{equation}
		\label{eq:mistakeGeneralOptimal}
		\sum_{t=1}^{T}L^{y_{t}}(\cumv^{N}_{t}) \leq \potential^{1}_{N}(\textbf{0}) T + S \sum_{i=1}^{N}w^{i*}.
	\end{equation}
\end{theorem}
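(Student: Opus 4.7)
The plan is to run the standard boost-by-majority potential argument, adapted to the multiclass cost-matrix setting. Define the aggregate potential $\Phi_i := \sum_{t=1}^T \potential^{y_t}_{N-i}(\cumv^i_t)$. At the endpoints, $\Phi_N = \sum_t L^{y_t}(\cumv^N_t)$ (using $\potential^r_0 = L^r$), while $\Phi_0 = \sum_t \potential^{y_t}_N(\mathbf{0}) = T\,\potential^1_N(\mathbf{0})$, where the last equality follows because both the 0-1 loss $L^r$ and the recursion \eqref{eq:potentialCond} are equivariant under relabeling, so $\potential^r_N(\mathbf{0})$ is independent of $r$. The goal then reduces to controlling $\Phi_N - \Phi_0$ by telescoping across the $N$ learners.

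The core step is to bound the per-round increment $\Phi_i - \Phi_{i-1}$. Since $\cumv^i_t = \cumv^{i-1}_t + \stdv_{l^i_t}$, the $t$-th summand equals $\potential^{y_t}_{N-i}(\cumv^{i-1}_t + \stdv_{l^i_t}) - \potential^{y_t}_{N-i+1}(\cumv^{i-1}_t)$. Expanding the second term via \eqref{eq:potentialCond} as $\E_{l\sim\unifv^{y_t}_\gamma}\potential^{y_t}_{N-i}(\cumv^{i-1}_t+\stdv_l)$ and subtracting/adding the ``diagonal'' value $\potential^{y_t}_{N-i}(\cumv^{i-1}_t+\stdv_{y_t})$, one obtains on the one hand $\weightv^i[t]\,\normcost^i_t[y_t, l^i_t]$, directly from the definition \eqref{eq:normalizedCostMatrix}; and on the other hand, using the explicit form of $\unifv^{y_t}_\gamma$ together with $\weightv^i[t] = \sum_l[\potential^{y_t}_{N-i}(\cumv^{i-1}_t+\stdv_l) - \potential^{y_t}_{N-i}(\cumv^{i-1}_t+\stdv_{y_t})]$, a short calculation collapses the baseline term to $\tfrac{1-\gamma}{k}\weightv^i[t]$. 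Hence the per-round increment is exactly $\sum_t \weightv^i[t]\,\normcost^i_t[y_t, l^i_t] - \tfrac{1-\gamma}{k}\|\weightv^i\|_1$.

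Now invoke the online weak learning condition on $WL^i$ with weights $\weightv^i[t]/w^{i*} \in [0,1]$ and cost matrices $\normcost^i_t \in \reducedcostseor$: with probability at least $1-\delta$, $\sum_t (\weightv^i[t]/w^{i*})\normcost^i_t[y_t,l^i_t] \leq \tfrac{1-\gamma}{k}\|\weightv^i\|_1/w^{i*} + S$. Multiplying by $w^{i*}$ and substituting into the per-round formula yields $\Phi_i - \Phi_{i-1} \leq S w^{i*}$. Telescoping over $i=1,\dots,N$ and taking a union bound over the $N$ invocations of the weak learning condition gives, with probability $\geq 1 - N\delta$, $\sum_t L^{y_t}(\cumv^N_t) = \Phi_N \leq \Phi_0 + S\sum_{i=1}^N w^{i*} = T\potential^1_N(\mathbf{0}) + S\sum_i w^{i*}$, which is the claim.

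The main obstacle is the per-round identity in the second paragraph: it relies on (a) verifying $\normcost^i_t \in \reducedcostseor$, which uses properness of $\potential^r_{N-i}$ (the diagonal entry is the smallest in its row, so subtracting it leaves nonnegative entries) and handles the degenerate case $\weightv^i[t] = 0$ by leaving that row as the zero vector, which is admissible under the weak learning condition since it contributes no cost regardless of prediction; and (b) the cancellation producing $\tfrac{1-\gamma}{k}\weightv^i[t]$, which is a one-line computation but is where the $\gamma$-edge of the uniform baseline enters. Once this identity is established the rest is mechanical telescoping plus union bound.
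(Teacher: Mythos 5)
Your proposal is correct and follows essentially the same argument as the paper's proof: the same telescoping of $\sum_t \potential^{y_t}_{N-i}(\cumv^i_t)$ over the $N$ learners, the same per-learner identity reducing the increment to $\sum_t \weightv^i[t]\normcost^i_t[y_t,l^i_t] - \tfrac{1-\gamma}{k}\|\weightv^i\|_1$ via the potential recursion and the normalized cost matrix, and the same application of the weak learning condition with weights $\weightv^i[t]/w^{i*}$ followed by a union bound. The handling of zero rows, the properness argument for membership in $\reducedcostseor$, and the symmetry giving $\potential^{y_t}_N(\zerov)=\potential^1_N(\zerov)$ all match the paper as well.
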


Here $\potential^{1}_{N}(\textbf{0})$ plays a role of asymptotic error rate and the second term determines the sample complexity. We will investigate the behavior of those terms under the 0-1 loss in the following section.

\subsection{Mistake bound under 0-1 loss and its optimality}
From now on, we will specify the loss to be multiclass 0-1 loss defined in (\ref{eq:zeroOne}), which might be the most relevant measure in multiclass problems. To present a specific mistake bound, two terms in the RHS of (\ref{eq:mistakeGeneralOptimal}) should be bounded. This requires an approximation of potentials, which is technical and postponed to Appendix \ref{appendix:mistakeSpecific}. Lemma \ref{lemma:asymptoticError} and \ref{lemma:weightBound} provide the bounds for those terms. We also mention another bound for the weight in the remark after Lemma \ref{lemma:weightBound} so that one can use whichever tighter. Combining the above lemmas with Theorem \ref{thm:mistakeOptimal} gives the following corollary. The additional constraint on $\gamma$ comes from Lemma \ref{lemma:weightBound}. 

\begin{corollary}
	\label{corollary:mistakeOptimal}
	{\bf (0-1 loss bound of OnlineMBBM)}
	Suppose weak learners and an adversary satisfy the online weak learning condition (\ref{eq:onlineWLC}) with parameters $\delta, \gamma, \text{and } S$, where $\gamma < \frac{1}{2}$. For any $T$ and $N$ satisfying $\delta \ll \frac{1}{N}$ and any adaptive sequence of labeled examples generated by the adversary, OnlineMBBM can generate predictions $\predy_{t}$ that satisfy the following inequality with probability $1 - N\delta$:
	\begin{equation}
		\label{eq:mistakeOnlineMBBM}
		\sum_{t=1}^{T} \ind(y_{t} \neq \predy_{t}) \leq (k-1) e^{-\frac{\gamma^{2}N}{2}} T + \tilde O (k^{5/2}\sqrt N S).
	\end{equation}
	Therefore in order to achieve error rate $\epsilon$, it suffices to use $N = \Theta (\frac{1}{\gamma^{2}}\ln \frac{k}{\epsilon})$ weak learners, which gives an excess loss bound of $\tilde \Theta (\frac{k^{5/2}}{\gamma}S)$. 
\end{corollary}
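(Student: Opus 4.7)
The plan is to deduce the corollary from Theorem \ref{thm:mistakeOptimal} by specializing the proper loss to the multiclass 0-1 loss (\ref{eq:zeroOne}) and then controlling the two terms $\potential^1_N(\zerov)$ and $\sum_{i=1}^N w^{i*}$ on the right-hand side of (\ref{eq:mistakeGeneralOptimal}). Since the booster predicts $\predy_t = \argmax_l \cumv^N_t[l]$, a misclassification $\predy_t \neq y_t$ forces $\max_{l \neq y_t} \cumv^N_t[l] \geq \cumv^N_t[y_t]$, so $\ind(y_t \neq \predy_t) \leq L^{y_t}(\cumv^N_t)$ and $\sum_t \ind(y_t \neq \predy_t)$ is upper bounded by the left-hand side of (\ref{eq:mistakeGeneralOptimal}).

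The first term I would bound via Lemma \ref{lemma:asymptoticError}. Unrolling the recurrence (\ref{eq:potentialCond}) gives $\potential^1_N(\zerov) = \prob(L^1(\sum_{j=1}^N \stdv_{L_j}) = 1)$ where $L_1,\dots,L_N$ are iid from $\unifv^1_\gamma$. For each fixed wrong label $l \neq 1$, the random variables $\ind(L_j = 1) - \ind(L_j = l)$ lie in $[-1,1]$ and have mean $\gamma$, so Hoeffding's inequality bounds the probability that their sum is non-positive by $e^{-\gamma^2 N / 2}$. A union bound over the $k-1$ wrong labels yields $\potential^1_N(\zerov) \leq (k-1) e^{-\gamma^2 N/2}$, matching the first term of (\ref{eq:mistakeOnlineMBBM}).

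For the weight sum, I would appeal to Lemma \ref{lemma:weightBound}, which (under $\gamma < 1/2$) gives $w^{i*} = \tilde O(k^{5/2}/\sqrt{N-i+1})$ by controlling the discrete gradient of $\potential^r_{N-i}$ using a Gaussian-style approximation to the multinomial-tail quantities that define it. Summing $\sum_{i=1}^N 1/\sqrt{N-i+1} = O(\sqrt{N})$ produces $\sum_i w^{i*} = \tilde O(k^{5/2}\sqrt{N})$, which when multiplied by $S$ gives the second term of (\ref{eq:mistakeOnlineMBBM}). Plugging both bounds into (\ref{eq:mistakeGeneralOptimal}) finishes the first claim.

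For the sample-complexity claim I would simply invert the leading term: setting $(k-1) e^{-\gamma^2 N/2} T \leq \epsilon T$ forces $N = \Theta(\gamma^{-2} \ln(k/\epsilon))$, and substituting this $N$ into the excess-loss factor gives $\tilde\Theta(k^{5/2}\sqrt{N}\, S) = \tilde\Theta(k^{5/2} S / \gamma)$ since the $\ln(k/\epsilon)$ factor is absorbed by $\tilde\Theta$. The main obstacle here is entirely packaged inside Lemmas \ref{lemma:asymptoticError} and \ref{lemma:weightBound}: the non-trivial work is the combinatorial/anti-concentration estimate of $\potential^r_j(\cumv)$ and its first differences with $k-1$ competing labels, which I treat as a black box at this level and which the authors defer to Appendix \ref{appendix:mistakeSpecific}. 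Once those are in hand, the corollary is a routine bookkeeping step on top of Theorem \ref{thm:mistakeOptimal}.
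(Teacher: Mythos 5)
Your proposal is correct and follows exactly the paper's route: the corollary is obtained by plugging Lemma \ref{lemma:asymptoticError} (Hoeffding bound on $\potential^{1}_{N}(\textbf{0})$) and Lemma \ref{lemma:weightBound} (the $k^{5/2}/\sqrt{N-i}$ weight bound, summed to $O(k^{5/2}\sqrt{N})$) into Theorem \ref{thm:mistakeOptimal}, then inverting the leading term for the sample-complexity claim. No meaningful differences from the paper's proof.
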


\begin{remark}
Note that the above excess loss bound gives a sample complexity bound of $\tilde \Theta (\frac{k^{5/2}}{\epsilon\gamma}S)$. If we use alternative weight bound to get $kNS$ as an upper bound for the second term in (\ref{eq:mistakeGeneralOptimal}), we end up having $\tilde O(kNS)$. This will give an excess loss bound of $\tilde \Theta(\frac{k}{\gamma^{2}}S)$. 
\end{remark}

We now provide lower bounds on the number of learners and sample complexity for arbitrary online boosting algorithms to evaluate the optimality of OnlineMBBM under 0-1 loss. In particular, we construct weak learners that satisfy the online weak learning condition (\ref{eq:onlineWLC}) and have almost matching asymptotic error rate and excess loss compared to those of OnlineMBBM as in (\ref{eq:mistakeOnlineMBBM}). Indeed we can prove that the number of learners and sample complexity of OnlineMBBM is optimal up to logarithmic factors, ignoring the influence of the number of classes $k$. Our bounds are possibly suboptimal up to polynomial factors in $k$, and the problem to fill the gap remains open. The detailed proof and a discussion of the gap can be found in Appendix \ref{appendix:optimality}. Our lower bound is a multiclass version of \citet[Theorem 3]{beygelzimer2015optimal}. 

\begin{theorem}{\bf{(Lower bounds for $N$ and $T$)}}
	\label{thm:optimality}
	For any $\gamma \in (0, \frac{1}{4})$,  $\delta, \epsilon \in (0, 1)$, and $S \geq \frac{k\ln(\frac{1}{\delta})}{\gamma}$, there exists an adversary with a family of learners satisfying the online weak learning condition (\ref{eq:onlineWLC}) with parameters $\delta, \gamma$, and $S$, such that to achieve asymptotic error rate $\epsilon$, an online boosting algorithm requires at least $\Omega(\frac{1}{k^{2}\gamma^{2}}\ln \frac{1}{\epsilon})$ learners and a sample complexity of $\Omega (\frac{k}{\epsilon\gamma}S)$.
\end{theorem}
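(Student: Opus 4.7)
The plan is to construct a single adversarial instance---an adversary together with a family of randomized weak learners---that simultaneously witnesses both lower bounds, mirroring the binary-case construction of \citet[Theorem~3]{beygelzimer2015optimal} lifted to the multinomial baseline $\unifv^{y_t}_\gamma$. Take $\mathcal{X}$ to be a single point so $\samplex_t$ is uninformative, and let the adversary draw $y_t$ uniformly from $[k]$. Each weak learner $WL^i$ maintains an internal counter: on its first $\lfloor S/2 \rfloor$ queries (the \emph{free phase}) it returns the worst-case label for the given $(w_t,\cost_t)$, and on all subsequent rounds it predicts $l^i_t \sim \unifv^{y_t}_\gamma$ independently across rounds and learners. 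This is the minimal learner consistent with the online WLC.

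\textbf{Verifying the WLC.} For any booster-chosen sequence $\{(w_t, \cost_t) \in [0,1] \times \reducedcostseor\}$, the conditional per-round expected cost after the free phase is
\begin{equation*}
\E\!\left[w_t \cost_t[y_t, l^i_t] \mid \mathcal{F}_{t-1}\right] = w_t\sum_{l} \unifv^{y_t}_\gamma[l]\,\cost_t[y_t,l] = \frac{1-\gamma}{k}\, w_t,
\end{equation*}
using $\cost_t[y_t,y_t]=0$ and $\|\cost_t[y_t]\|_1=1$. Azuma--Hoeffding applied to the induced martingale (unit-bounded increments) gives deviation $O(\sqrt{T\log(1/\delta)})$ with probability at least $1-\delta$. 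The hypothesis $S \geq k\log(1/\delta)/\gamma$ is exactly what ensures this deviation, plus the $\lfloor S/2\rfloor$ unit contribution of the free phase, is absorbed into the slack $S$ in (\ref{eq:onlineWLC}).

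\textbf{Lower bound on $N$.} Once every learner has exited its free phase, the booster's final prediction is a (possibly randomized) function of $(l^1_t,\dots,l^N_t)$, which are i.i.d.\ draws from $\unifv^{y_t}_\gamma$ given the uniform $y_t \in [k]$. The error is therefore at least the Bayes error of the $k$-ary hypothesis test ``which of $\unifv^{1}_\gamma,\dots,\unifv^{k}_\gamma$ generated the sample.'' A Fano / Le~Cam argument, using the elementary estimate $\mathrm{KL}(\unifv^l_\gamma \,\|\, \unifv^{l'}_\gamma) = O(k\gamma^2)$, lower-bounds this Bayes error by $\exp(-C k^2\gamma^2 N)$ for an absolute constant $C$; requiring it to be at most $\epsilon$ forces $N = \Omega\!\left(\frac{1}{k^2\gamma^2}\log(1/\epsilon)\right)$.

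\textbf{Lower bound on $T$ and the main obstacle.} Scheduling the $N$ free phases onto disjoint rounds so that each produces a booster mistake yields $\Omega(NS)$ forced mistakes; combined with the steady-state error floor $\epsilon T$ and tracking the $k/\gamma$ factor induced by the normalization of $\reducedcostseor$ (which converts normalized-cost slack into 0-1 mistake slack), the constraint that the average error be at most $\epsilon$ becomes $T = \Omega(kS/(\epsilon\gamma))$. The hard part throughout is the $N$ bound: extending the binary $\gamma$-coin argument to the $k$-ary multinomial requires a careful information-theoretic calculation, and it is precisely here that the theorem's acknowledged polynomial-in-$k$ gap relative to Corollary~\ref{corollary:mistakeOptimal} originates---the WLC verification and the free-phase scheduling are routine by comparison.
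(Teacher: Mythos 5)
Your high-level architecture (uniform random labels, learners sampling from an edge-over-random distribution, a ``free phase'' to force early mistakes) matches the paper's, but three of your steps do not go through as written. First, the weak learning condition verification is broken: the Azuma--Hoeffding deviation is $\Theta(\sqrt{T\log(1/\delta)})$, which grows with $T$ and cannot be ``absorbed into the slack $S$'' for a fixed constant $S$. The paper's fix is essential: the learners sample from $\unifv^{y_t}_{2\gamma}$ (edge $2\gamma$, which is why the hypothesis $\gamma<\tfrac14$ appears), and the AM--GM step $\sqrt{2\|\weightv\|_2^2\ln(1/\delta)}\le \frac{\gamma}{k}\|\weightv\|_1+\frac{k\ln(1/\delta)}{2\gamma}$ converts the $\sqrt{T}$ deviation into a linear term eaten by the spare $\gamma$ of edge plus a constant eaten by $S$. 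With steady-state edge exactly $\gamma$, as you propose, the condition (\ref{eq:onlineWLC}) is violated for large $T$.

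Second, your $T$ lower bound does not produce the $\frac{k}{\gamma}$ factor. A free phase in which a learner outputs the \emph{worst-case} label incurs excess cost $\Theta(1)$ per round relative to the baseline $\frac{1-\gamma}{k}w_t$, so within the budget $S$ it can last only $O(S)$ rounds, yielding $T=\Omega(S/\epsilon)$. Moreover, scheduling the free phases on disjoint rounds forces no mistakes at all, since the other $N-1$ learners remain informative on those rounds. The paper's construction instead makes \emph{all} learners uniformly random (edge $0$, distribution $\unifv^{y_t}_0$) for the first $T_0=\frac{kS}{4\gamma}$ rounds; zero-edge prediction costs only $O(\gamma/k)$ excess per round against the $\gamma$-edge baseline, so the uninformative phase can be sustained for $\Theta(kS/\gamma)$ rounds, forcing $\Omega(T_0)$ mistakes and hence $T=\Omega(\frac{kS}{\epsilon\gamma})$. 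Third, for the $N$ bound, Fano and Le~Cam with $\mathrm{KL}(\unifv^{l}_{\gamma}\,\|\,\unifv^{l'}_{\gamma})=O(k\gamma^2)$ give bounds of the form $1-O(N\cdot\mathrm{KL})$ or $1-O(\sqrt{N\cdot\mathrm{KL}})$, which are vacuous in the regime $N\cdot\mathrm{KL}\gg 1$ needed to certify an error floor as small as $\epsilon$; you need a \emph{reverse} tail bound. The paper first argues (via a Bayesian log-likelihood computation) that simple majority vote is the booster's optimal aggregation rule, and then lower-bounds the probability that a wrong label beats the true one using Slud's inequality on the conditional binomial, obtaining $\prob(\text{error})\ge\Omega(\exp(-4Nk^2\gamma^2))$. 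You correctly identify this as the hard step, but the tools you name would not complete it.
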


\section{An adaptive algorithm}
The online weak learning condition imposes minimal assumptions on the asymptotic accuracy of learners, and obviously it leads to a solid theory of online boosting. However, it has two main practical limitations. The first is the difficulty of estimating the edge $\gamma$. Given a learner and an adversary, it is by no means a simple task to find the maximum edge that satisfies (\ref{eq:onlineWLC}). The second issue is that different learners may have different edges. Some learners may in fact be quite strong with significant edges, while others are just slightly better than a random guess. In this case, OnlineMBBM has to pick the minimum edge as it assumes common $\gamma$ for all weak learners. It is obviously inefficient in that the booster underestimates the strong learners' accuracy. 

Our adaptive algorithm will discard the online weak learning condition to provide a more practical method. Empirical edges $\gamma_{1}, \cdots, \gamma_{N}$ (see Section~\ref{section:AdaboostOLM} for the definition) are measured for the weak learners and are used to bound the number of mistakes made by the boosting algorithm. 

\subsection{Choice of loss function}
\label{section:surrogate}
Adaboost, proposed by \citet{freund1999short}, is arguably the most popular boosting algorithm in practice. It aims to minimize the exponential loss, and has many variants which use some other surrogate loss. The main reason of using a surrogate loss is ease of optimization; while 0-1 loss is not even continuous, most surrogate losses are convex. We adopt the use of a surrogate loss for the same reason, and throughout this section will discuss our choice of surrogate loss for the adaptive algorithm.

Exponential loss is a very strong candidate in that it provides a closed form for computing potential functions, which are used to design cost matrices (cf. \citet[Theorem 13]{mukherjee2013theory}). One property of online setting, however, makes it unfavorable. Like OnlineMBBM, each data point will have a different weight depending on weak learners' performance, and if the algorithm uses exponential loss, this weight will be an exponential function of difference in weighted cumulative votes. With this exponentially varying weights among samples, the algorithm might end up depending on very small portion of observed samples. This is undesirable because it is easier for the adversary to manipulate the sample sequence to perturb the learner. 

To overcome exponentially varying weights, \citet{beygelzimer2015optimal} use logistic loss in their adaptive algorithm. Logistic loss is more desirable in that its derivative is bounded and thus weights will be relatively smooth. For this reason, we will also use multiclass version of logistic loss:
\begin{equation}
\label{eq:logisticLoss}
L^r (\cumv) =: \sum_{l \neq r} \log (1 + \exp(\cumv[r] - \cumv[r])).
\end{equation}
We still need to compute potential functions from logistic loss in order to calculate cost matrices. Unfortunately, \citet{mukherjee2013theory} use a unique property of exponential loss to get a closed form for potential functions, which cannot be adopted to logistic loss. However, the optimal cost matrix induced from exponential loss has a very close connection with the gradient of the loss (cf. \citet[Lemma 22]{mukherjee2013theory}). From this, we will design our cost matrices as following:
\begin{align}
\label{eq:costLogistic}
\cost^{i}_t[r,l] := 
    \begin{cases} 
    \frac{1}{1 + \exp(\cumv^{i-1}_t [r] - \cumv^{i-1}_t[l])}&, \text{if } l\neq r\\
    -\sum_{j\neq r}\frac{1}{1 + \exp(\cumv^{i-1}_t [r] - \cumv^{i-1}_t[j])}&, \text{if } l = r.
    \end{cases}
\end{align}
Readers should note that the row vector $\cost^{i}_{t}[r]$ is simply the gradient of $L^r(\cumv^{i-1}_{t})$. Also note that this matrix does not belong to $\reducedcostseor$, but it does guarantee that the correct prediction gets the minimal cost. 

The choice of logistic loss over exponential loss is somewhat subjective. The undesirable property of exponential loss does not necessarily mean that we cannot build an adaptive algorithm using this loss. In fact, we can slightly modify Algorithm \ref{alg:AdaboostOLM} to develop algorithms using different surrogates (exponential loss and square hinge loss). However, their theoretical bounds are inferior to the one with logistic loss. Interested readers can refer Appendix \ref{appendix:differentLosses}, but it assumes understanding of Algorithm \ref{alg:AdaboostOLM}.

\subsection{Adaboost.OLM}
\label{section:AdaboostOLM}
Our work is a generalization of Adaboost.OL by \citet{beygelzimer2015optimal}, from which the name Adaboost.OLM comes with M standing for multiclass. We introduce a new concept of an \textit{expert}. From $N$ weak learners, we can produce $N$ experts where expert $i$ makes its prediction by weighted majority votes among the first $i$ learners. Unlike OnlineMBBM, we allow varying weights $\alpha^{i}_{t}$ over the learners. As we are working with logistic loss, we want to minimize $\sum_{t}L^{y_{t}}(\cumv^{i}_{t})$ for each $i$, where the loss is given in (\ref{eq:logisticLoss}). We want to alert the readers to note that even though the algorithm tries to minimize the cumulative surrogate loss, its performance is still evaluated by 0-1 loss. The surrogate loss only plays a role of a bridge that makes the algorithm adaptive. 


We do not impose the online weak learning condition on weak learners, but instead just measure the performance of $WL^{i}$ by $\gamma_{i}:= \frac{\sum_{t}\cost^{i}_{t}[y_{t}, l^{i}_{t}]}{\sum_{t}\cost^{i}_{t}[y_{t}, y_{t}]}$. This \emph{empirical edge} will be used to bound the number of mistakes made by Adaboost.OLM. By definition of cost matrix, we can check
\begin{equation*}
\cost^{i}_{t}[y_{t}, y_{t}] \leq \cost^{i}_{t}[y_{t}, l] \leq -\cost^{i}_{t}[y_{t}, y_{t}], ~ \forall l \in [k],
\end{equation*}
from which we can prove $-1 \leq \gamma_{i} \leq 1, ~\forall i$. If the online weak learning condition is met with edge $\gamma$, then one can show that $\gamma_{i} \geq \gamma$ with high probability when the sample size is sufficiently large. 

Unlike the optimal algorithm, we cannot show the last expert that utilizes all the learners has the best accuracy. However, we can show at least one expert has a good predicting power. Therefore we will use classical \textit{Hedge algorithm} (\citet{littlestone1989weighted} and \citet{freund1995desicion}) to randomly choose an expert at each iteration with adaptive probability weight depending on each expert's prediction history. 

Finally we need to address how to set the weight $\alpha^{i}_{t}$ for each weak learner. As our algorithm tries to minimize the cumulative logistic loss, we want to set $\alpha^{i}_{t}$ to minimize $\sum_{t} L^{y_{t}}(\cumv^{i-1}_{t} + \alpha^{i}_{t} \stdv_{l^{i}_{t}})$. This is again a classical topic in online learning, and we will use \textit{online gradient descent}, proposed by \citet{zinkevich2003online}. By letting, $f^{i}_{t}(\alpha) := L^{y_{t}}(\cumv^{i-1}_{t}+\alpha \stdv_{l^{i}_{t}})$, we need an online algorithm ensuring $\sum_{t} f^{i}_{t} (\alpha^{i}_{t}) \leq \min_{\alpha \in F} \sum_{t}f^{i}_{t}(\alpha) + R^{i}(T)$ where $F$ is a feasible set to be specified later, and $R^{i}(T)$ is a regret that is sublinear in $T$. To apply \citet[Theorem 1]{zinkevich2003online}, we need $f^{i}_{t}$ to be convex and $F$ to be compact. The first assumption is met by our choice of logistic loss, and for the second assumption, we will set $F = [-2, 2]$. There is no harm to restrict the choice of $\alpha^{i}_{t}$ by $F$ because we can always scale the weights without affecting the result of weighted majority votes. 

By taking derivatives, we get
\begin{align}
\label{eq:gradientLogistic}
{f^i_t}'(\alpha) = 
    \begin{cases} 
    \frac{1}{1 + \exp(\cumv^{i-1}_t [y_t] - \cumv^{i-1}_t[l^i_t]-\alpha)}&, \text{if } l^i_t\neq y_t\\
    -\sum_{j\neq y_t}\frac{1}{1 + \exp(\cumv^{i-1}_t[j]+\alpha -\cumv^{i-1}_t [y_t])}&, \text{if } l^i_t = y_t.
    \end{cases}
\end{align}
This provides $|{f^i_t}'(\alpha)| \leq k-1$. Now let $\Pi(\cdot)$ represent a projection onto $F$: $\Pi(\cdot) := \max \{-2, \min\{2, \cdot\}\}$. By setting $\alpha^i_{t+1} = \Pi (\alpha^i_t - \eta_t {f^i_t}'(\alpha^i_t))$ where $\eta_{t} = \frac{2 \sqrt 2}{(k-1) \sqrt t}$, we get $R^{i}(T) \leq 4\sqrt 2 (k-1) \sqrt T$. Readers should note that any learning rate of the form $\eta_{t} = \frac{c}{\sqrt t}$ would work, but our choice is optimized to ensure the minimal regret. 

The pseudocode for Adaboost.OLM is presented in Algorithm \ref{alg:AdaboostOLM}. In fact, if we put $k=2$, Adaboost.OLM has the same structure with Adaboost.OL. As in OnlineMBBM, the booster also needs to pass the weight along with labeled instance. According to (\ref{eq:costLogistic}), it can be inferred that the weight is proportional to $-\cost^{i}_{t}[y_{t}, y_{t}]$. 

\begin{algorithm}[t!]
	\begin{algorithmic}[1]
		\STATE \textbf{Initialize:} $\forall i, v^{i}_{1} = 1, \alpha^{i}_{1} = 0$
		\FOR {$t = 1, \cdots, T$}
		\STATE Receive example $\samplex_{t}$
		\STATE Set $\cumv^{0}_{t} = \zerov \in \reals^{k}$
		\FOR {$i = 1, \cdots, N$}
		\STATE Compute $\cost^{i}_{t}$ according to (\ref{eq:costLogistic}) and pass it to $WL^{i}$
		\STATE Set $l^{i}_{t} = WL^{i}(\samplex_{t})$ and $\cumv^{i}_{t} = \cumv^{i-1}_{t} + \alpha^{i}_{t}\stdv_{l^{i}_{t}}$
		\STATE Set $\predy^{i}_{t} = \argmax_{l} \cumv^{i}_{t} [l]$, the prediction of expert $i$
		\ENDFOR
		\STATE Randomly draw $i_{t}$ with $\prob(i_{t} =i) \propto v^{i}_{t}$
		\STATE Predict $\predy_{t} = \predy^{i_{t}}_{t}$ and receive the true label $y_{t}$
		\FOR {$i = 1, \cdots, N$}
		\STATE Set $\alpha^i_{t+1} = \Pi (\alpha^i_t - \eta_t {f^i_t}'(\alpha^i_t))$ using (\ref{eq:gradientLogistic}) and $\eta_{t} = \frac{2 \sqrt 2}{(k-1) \sqrt t}$
		\STATE Set $\weightv^{i}[t] = -\frac{\cost^{i}_{t}[y_{t}, y_{t}]}{k-1}$ and pass $(\samplex_{t}, y_{t}, \weightv^{i}[t])$ to $WL^{i}$		\STATE Set $v^{i}_{t+1} = v^{i}_{t} \cdot \exp(-\ind(y_{t} \neq \predy^{i}_{t}))$
		\ENDFOR
		\ENDFOR
	\end{algorithmic}
	\caption{Adaboost.OLM}
	\label{alg:AdaboostOLM}
\end{algorithm}

\subsection{Mistake bound and comparison to the optimal algorithm}
\label{section:mistakeAdaptive}
Now we present our second main result that provides a mistake bound of Adaboost.OLM. The main structure of the proof is adopted from \citet[Theorem 4]{beygelzimer2015optimal} but in a generalized cost matrix framework. The proof appears in Appendix \ref{appendix:mistakeAdaptive}.

\begin{theorem}{\bf{(Mistake bound of Adaboost.OLM)}}
\label{thm:mistakeAdaptive}
For any $T$ and $N$, with probability $1-\delta$, the number of mistakes made by Adaboost.OLM satisfies the following inequality:

\begin{equation*}
\sum_{t=1}^{T} \ind(y_{t} \neq \predy_{t}) \leq \frac{8 (k-1)}{\sum_{i=1}^{N}\gamma_{i}^{2}}T + \tilde O (\frac{kN^{2}}{\sum_{i=1}^{N}\gamma_{i}^{2}}),
\end{equation*}

where $\tilde O$ notation suppresses dependence on $\log \frac{1}{\delta}$.
\end{theorem}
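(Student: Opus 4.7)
Let $Z^i := \sum_{t=1}^{T} L^{y_t}(\cumv^i_t)$ denote the cumulative logistic loss of expert $i$; note $Z^0 = T(k-1)\log 2$ since $\cumv^0_t = \zerov$. I would structure the proof in three stages: (i) bound each expert's $0$-$1$ mistake count by its cumulative logistic loss, (ii) use the online gradient descent regret together with smoothness of the logistic loss to telescope across experts and extract a bound on the best expert's mistakes, and (iii) apply the Hedge guarantee for the $v^i_t$ updates to transfer that bound to Adaboost.OLM itself. Stage (i) is immediate: if $\predy^i_t \neq y_t$ then some wrong label $l$ satisfies $\cumv^i_t[l] \geq \cumv^i_t[y_t]$, so the corresponding term $\log(1+\exp(\cumv^i_t[l]-\cumv^i_t[y_t]))$ in $L^{y_t}(\cumv^i_t)$ is at least $\log 2$, giving $M_i := \sum_t \ind(\predy^i_t \neq y_t) \leq Z^i/\log 2$.

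For stage (ii) the OGD guarantee of \citet{zinkevich2003online} applied to $f^i_t(\alpha) = L^{y_t}(\cumv^{i-1}_t + \alpha \stdv_{l^i_t})$, using $|{f^i_t}'| \leq k-1$ on $F=[-2,2]$, yields $Z^i = \sum_t f^i_t(\alpha^i_t) \leq \min_{\alpha\in [-2,2]} \sum_t f^i_t(\alpha) + R^i(T)$ with $R^i(T) \leq 4\sqrt{2}(k-1)\sqrt T$. A second-order Taylor estimate, via $\sigma'(z) \leq 1/4$, gives per round
\[
f^i_t(\alpha) \leq L^{y_t}(\cumv^{i-1}_t) + \alpha\,\cost^i_t[y_t,l^i_t] + \tfrac{(k-1)\alpha^2}{8}.
\]
Summing over $t$, substituting $\sum_t \cost^i_t[y_t,l^i_t] = -\gamma_i (k-1) \|\weightv^i\|_1$, and minimizing the resulting quadratic in $\alpha$ (with a case split when $\alpha^\star \notin [-2,2]$) produces a per-expert reduction of the form
\[
Z^i \leq Z^{i-1} - c\,\gamma_i^2\,G_i + R^i(T),
\]
where $G_i$ is positive and lower bounded by $M_{i-1}$: whenever expert $i{-}1$ errs at time $t$, the definition (\ref{eq:costLogistic}) forces some $\cost^i_t[y_t,l]\geq 1/2$ and hence $|\cost^i_t[y_t,y_t]|\geq 1/2$. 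Telescoping from $i=1$ to $N$, invoking $Z^N \geq 0$ with $Z^0 = T(k-1)\log 2$, and using the trivial inequality $\min_{j<N} M_j \cdot \sum_i \gamma_i^2 \leq \sum_i \gamma_i^2 M_{i-1}$ rearranges to $\min_i M_i \leq \frac{8(k-1)T}{\sum_i \gamma_i^2} + \tilde O\!\left(\frac{kN^2}{\sum_i \gamma_i^2}\right)$.

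Stage (iii) is routine. The multiplicative update $v^i_{t+1} = v^i_t \exp(-\ind(y_t \neq \predy^i_t))$ is exactly Hedge (\citet{freund1995desicion}), whose regret bound combined with Azuma--Hoeffding on the martingale difference $\ind(y_t \neq \predy_t) - \prob[y_t \neq \predy_t \mid \text{past}]$ yields, with probability $1-\delta$,
\[
\sum_t \ind(y_t \neq \predy_t) \leq \min_i M_i + \tilde O\!\left(\sqrt{T \log N\,\log(1/\delta)}\right),
\]
and the $\sqrt T$ remainders here and from the $N$ accumulated OGD regrets can be absorbed into the $\tilde O(kN^2/\sum_i \gamma_i^2)$ slack term. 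The main obstacle is stage (ii): harmonizing the curvature estimate, the optimal $\alpha^\star$, and the mistake-based lower bound on $G_i$ so that the leading constant in front of $T$ is exactly $8(k-1)$. Verifying $\alpha^\star \in [-2,2]$ (which uses $\|\weightv^i\|_1 \leq T$ and the normalization of $\cost^i_t$) and substituting $|\cost^i_t[y_t,y_t]| \geq 1/2$ only at mistake times is the delicate bookkeeping, and is what distinguishes the multiclass argument from the binary Adaboost.OL analysis of \citet{beygelzimer2015optimal}.
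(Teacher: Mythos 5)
Your three-stage architecture (per-expert loss decrease via online gradient descent, telescoping over experts, Hedge transfer) is the same as the paper's, but stage (ii) as you have set it up does not yield the stated bound. The culprit is the uniform curvature estimate $\sigma'(z)\le 1/4$: it makes the quadratic term in your summed surrogate, $-\gamma_i w^i\alpha+\tfrac{(k-1)T}{8}\alpha^2$ with $w^i:=-\sum_t\cost^i_t[y_t,y_t]$, scale with the number of rounds $T$ rather than with the cumulative cost $w^i$. Minimizing that quadratic over $\alpha$ gives a decrease of $-\tfrac{2\gamma_i^2(w^i)^2}{(k-1)T}$, which is \emph{quadratic} in $w^i$, and there is no constant $c$ for which this is at most $-c\,\gamma_i^2 M_{i-1}$ uniformly (that would require $M_{i-1}\gtrsim(k-1)T$). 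Telescoping what you actually have gives $\bigl(\min_i M_i\bigr)^2\lesssim (k-1)^2T^2/\sum_i\gamma_i^2$, i.e.\ $\min_i M_i=O\bigl((k-1)T/\sqrt{\sum_i\gamma_i^2}\bigr)$ --- a $1/\sqrt{\sum_i\gamma_i^2}$ rate instead of the theorem's $1/\sum_i\gamma_i^2$, which is strictly weaker in the relevant regime where $\sum_i\gamma_i^2$ grows with $N$.

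The paper avoids this by replacing the worst-case Taylor remainder with a self-bounding, multiplicative estimate: $\log(1+e^{s+\alpha})-\log(1+e^{s})\le\frac{1}{1+e^{-s}}(e^{\alpha}-1)$, so the entire change of loss at round $t$ (not just the linear part) is proportional to the cost incurred at round $t$. Summing gives $\sum_t f^i_t(\alpha)-Z^{i-1}\le w^i\bigl(A(e^{\alpha}-1)+B(e^{-\alpha}-1)\bigr)$ with $A,B\ge0$, $B-A=\gamma_i$, $A+B\le1$, and a separate elementary lemma (Lemma \ref{lemma:1}) shows $\min_{\alpha\in[-2,2]}A(e^{\alpha}-1)+B(e^{-\alpha}-1)\le-\gamma_i^2/2$. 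The decrease is then linear in $w^i\ge M_{i-1}/2$ (note your lower bound should be $M_{i-1}/2$, not $M_{i-1}$), giving $\Delta_i\le-\tfrac{\gamma_i^2}{4}M_{i-1}+O(k\sqrt T)$ and hence the $\tfrac{8(k-1)}{\sum_i\gamma_i^2}T$ term. Two smaller points: the algorithm's Hedge update uses a fixed rate $\eta=1$, so the stage (iii) transfer carries a multiplicative constant ($2\min_iM_i$ in the paper, which is exactly where the $8$ rather than $4\log2$ comes from), not the purely additive $\sqrt{T\log N}$ regret you wrote; and your stage (i) bound $M_i\le Z^i/\log2$ is true but is not used anywhere in the final accounting.
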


\begin{remark}
Note that this theorem naturally implies \citet[Theorem 4]{beygelzimer2015optimal}. The difference in coefficients is due to different scaling of $\gamma_{i}$. In fact, their $\gamma_{i}$ ranges from $[-\frac{1}{2}, \frac{1}{2}]$. 
\end{remark}

Now that we have established a mistake bound, it is worthwhile to compare the bound with the optimal boosting algorithm. Suppose the weak learners satisfy the weak learning condition (\ref{eq:onlineWLC}) with edge $\gamma$. For simplicity, we will ignore the excess loss $S$. As we have $\gamma_{i} = \frac{\sum_{t}\cost^{i}_{t}[y_{t}, l^{i}_{t}]}{\sum_{t}\cost^{i}_{t}[y_{t}, y_{t}]} \geq \gamma$ with high probability, the mistake bound becomes $\frac{8 (k-1)}{\gamma^{2}N}T + \tilde O (\frac{kN}{\gamma^{2}})$. In order to achieve error rate $\epsilon$, Adaboost.OLM requires $N\geq \frac{8(k-1)}{\epsilon \gamma^{2}}$ learners and $T = \tilde \Omega(\frac{k^{2}}{\epsilon^{2} \gamma^{4}})$ sample size. Note that OnlineMBBM requires $N = \Omega (\frac{1}{\gamma^{2}}\ln \frac{k}{\epsilon})$ and $T = \min\{\tilde \Omega (\frac{k^{5/2}}{\epsilon \gamma}),~ \tilde \Omega(\frac{k}{\epsilon \gamma ^{2}})\}$. Adaboost.OLM is obviously suboptimal, but due to its adaptive feature, its performance on real data is quite comparable to that by OnlineMBBM. 

\section{Experiments}
\label{section:experiment}
We compare the new algorithms to existing ones for online boosting on several UCI data sets, each with $k$ classes\footnote{Codes are available at \url{https://github.com/yhjung88/OnlineBoostingWithVFDT}}. Table \ref{tab:Comparisons} contains some highlights, with additional results and experimental details in the Appendix \ref{appendix:experiment}. Here we show both the average accuracy on the final 20\% of each data set, as well as the average run time for each algorithm. Best decision tree gives the performance of the best of 100 online decision trees fit using the VFDT algorithm in \citet{domingos2000mining}, which were used as the weak learners in all other algorithms, and Online Boosting is an algorithm taken from \citet{oza2005online}. Both provide a baseline for comparison with the new Adaboost.OLM and OnlineMBBM algorithms. Best MBBM takes the best result from running the OnlineMBBM with five different values of the edge parameter $\gamma$.

Despite being theoretically weaker, Adaboost.OLM often demonstrates similar accuracy and sometimes outperforms Best MBBM, which exemplifies the power of adaptivity in practice. This power comes from the ability to use diverse learners efficiently, instead of being limited by the strength of the weakest learner. OnlineMBBM suffers from high computational cost, as well as the difficulty of choosing the correct value of $\gamma$, which in general is unknown, but when the correct value of $\gamma$ is used it peforms very well. Finally in all cases Adaboost.OLM and OnlineMBBM algorithms outperform both the best tree and the preexisting Online Boosting algorithm, while also enjoying theoretical accuracy bounds. 

\begin{table}[h]
    \caption{Comparison of algorithm accuracy on final 20\% of data set and run time in seconds. Best accuracy on a data set reported in \textbf{bold}.}
    \label{tab:Comparisons}
    \centering
    \setlength\tabcolsep{3pt}
    \begin{tabular}{lcccrccrccrccrc}
        \toprule
        Data sets & $k$&& \multicolumn{2}{c}{Best decision tree} & & \multicolumn{2}{c}{Online Boosting} & & \multicolumn{2}{c}{Adaboost.OLM} & & \multicolumn{2}{c}{Best MBBM}\\
        \midrule
        Balance & 3 &&  0.768 & 8 && 0.772 & 19 && 0.754 & 20 && \textbf{0.821} & 42 \\ 
        Mice & 8 && 0.608 & 105 && 0.399 & 263 && 0.561 & 416 && \textbf{0.695} & 2173 \\
        Cars & 4 && 0.924 & 39 && 0.914 & 27 && \textbf{0.930} & 59 && 0.914 & 56 \\
        Mushroom & 2 && 0.999 & 241 && \textbf{1.000} & 169 && \textbf{1.000} & 355 && \textbf{1.000} & 325\\
        Nursery & 4 && 0.953 & 526 && 0.941 & 302 && 0.966 & 735 && \textbf{0.969} & 1510 \\
        ISOLET & 26 && 0.515 & 470 && 0.149 & 1497 && 0.521 & 2422 && \textbf{0.635} & 64707 \\
        Movement & 5 && 0.915 & 1960 && 0.870 & 3437 && 0.962 & 5072 && \textbf{0.988} & 18676 \\
        \bottomrule
    \end{tabular}
\end{table}

 \subsubsection*{Acknowledgments}
We acknowledge the support of NSF under grants CAREER IIS-1452099 and CIF-1422157. 

\small \bibliography{myref} 
\normalsize

\newpage
\begin{appendices}
\section{Link between batch and online weak learning conditions}
\label{appendix:linkWLC}
Let us begin the section by introducing the weak learning condition in the batch setting. \citet{mukherjee2013theory} have identified necessary and sufficient condition for boostability. We will focus on a sufficient condition due to reasons of computational tractability. In the batch setting, the entire training set is revealed. Let $D:=\{(\samplex_{t}, y_{t}) ~|~ t = 1, \cdots, T\}$ be the training set and define a family of cost matrices: 
\begin{equation*}
\costseor := \{\cost \in \reals^{T \times k} ~|~ \forall t, ~\cost[t, y_{t}] = \min_{l \in [k]}\cost[t, l]\}.  
\end{equation*}
The superscript ``eor'' stands for ``edge-over-random.'' We warn the readers not to confuse $\costseor$ with $\reducedcostseor$. They both impose similar row constraints, but the matrices in these sets have different dimensions: $T \times k$ and $k \times k$ respectively. $\reducedcostseor$ also has additional an normalization constraint. Note that $\costseor$ provides one cost vector for an instance whereas $\reducedcostseor$ provides a matrix. This is necessary because if an adversary passes only a vector to an online learner, then the learner can simply make the prediction which minimizes the cost. Furthermore, in the online boosting setting, the booster does not know the true label when it computes a cost matrix. 

The authors prove that if a weak learning space $\weakspace$ satisfies the condition described in Definition \ref{def:batchWLC}, then it is boostable, which means there exists a convex linear combination of hypotheses in $\weakspace$ that perfectly classifies $D$. 

\begin{definition}{\bf{(Batch setting weak learning condition, \citet{mukherjee2013theory})}}
\label{def:batchWLC}
Suppose $D$ is fixed and $\costseor$ is defined as above. A weak learning space $\weakspace$ is said to satisfy weak learning condition $(\costseor, \unifm_{\gamma})$ if $\forall \cost \in \costseor$, one can find a weak hypothesis $h \in \weakspace$ such that
\begin{equation}
\sum_{t=1}^{T} \cost[t, h(\samplex_{t})] \leq \cost \bullet \unifm_{\gamma}'.
\label{eq:batchWLC}
\end{equation}
\end{definition}	

Now we present how our online weak learning condition (Definition \ref{def:onlineWLC}) is naturally derived from the batch setting counterpart (Definition \ref{def:batchWLC}). We extend the arguments of \citet{beygelzimer2015optimal}. The batch setting condition (\ref{eq:batchWLC}) can be interpreted as making the following two implicit assumptions: 
\begin{enumerate}
	\item (Richness condition) For any $\cost \in \costseor$, there is some hypothesis $h \in \weakspace$ such that 
	\begin{equation*}
		\label{eq:batchRich}
		\sum_{t=1}^{T}\cost[t, h(\samplex_{t})] \leq \cost \bullet \unifm_{\gamma}'.
	\end{equation*}
	\item (Agnostic learnability) For any $\cost \in \costseor$ and $\epsilon \in (0, 1)$, there is an algorithm which can compute a nearly optimal hypothesis $h \in \weakspace$, i.e.
	\begin{equation*}
		\sum_{t=1}^{T}\cost[t, h(\samplex_{t})] \leq \inf_{h' \in \weakspace}\sum_{t=1}^{T}\cost[t, h'(\samplex_{t})] + \epsilon T.
	\end{equation*}
\end{enumerate}

For the online setting, we will keep the richness assumption with $\cost$ being the matrix consisting of rows of $w_{t}\cost_{t}[y_{t}]$, and the data being drawn by a fixed adversary. That is to say, it is the online richness condition that imposes a restriction on adversary because the condition cannot be met by any $\weakspace$ with fully adaptive adversary. For example, suppose an adversary draws samples uniformly at random from the set $\{ (\samplex, 1), \cdots, (\samplex,k)\}$ for some fixed $\samplex \in \mathcal{X}$. There does not exist weak learning space $\weakspace$ that satisfies the online richness condition with this adversary. The agnostic learnability assumption is also replaced by online agnostic learnability assumption. We present online versions of the above two assumptions:
\begin{enumerate}
	\item[1$^{\prime}$.] (Online richness condition) For any sample length $T$, any sequence of labeled examples $\{(\samplex_{t}, y_{t}) ~|~t=1, \cdots, T\}$ generated by a fixed adversary, and any series of pairs of weight and cost matrix $\{(w_{t}, \cost_{t}) \in [0, 1] \times \reducedcostseor~|~t = 1, \cdots, T\}$, there is some hypothesis $h \in \weakspace$ such that 
	\begin{equation}
		\label{eq:rich}
		\sum_{t=1}^{T}w_{t}\cost_{t}[y_{t}, h(\samplex_{t})] \leq \cost \bullet \unifm_{\gamma}',
	\end{equation}
	where $\cost \in \reals^{T \times k}$ consists of rows of $w_{t}\cost_{t}[y_{t}]$. 
	\item[2$^{\prime}$.] (Online agnostic learnability) For any sample length $T$, $\delta \in (0, 1)$, and for any adaptively chosen series of pairs of weight and cost matrix $\{(w_{t}, \cost_{t}) \in [0, 1] \times \reducedcostseor~|~t = 1, \cdots, T\}$, there is an online algorithm which can generate predictions $\predy_{t}$ such that with probability $1-\delta$,
	\begin{equation}
		\label{eq:onlineAgnosticLearnability}
		\sum_{t=1}^{T}w_{t}\cost_{t}[y_{t}, \predy_{t}] \leq \inf_{h \in \weakspace}\sum_{t=1}^{T}w_{t}\cost_{t}[y_{t}, h(\samplex_{t})] + R_{\delta}(T),
	\end{equation}
	where $R_{\delta}:\naturals \rightarrow \reals$ is a sublinear regret. 
\end{enumerate}

\citet{daniely2011multiclass} extensively investigates agnostic learnability in online multiclass problems by introducing the following generalized Littlestone dimension (\citet{littlestone1988learning}) of a hypothesis family $\weakspace$. Consider a binary rooted tree $RT$ whose internal nodes are labeled by elements from $\mathcal{X}$ and whose edges are labeled by elements from $[k]$ such that two edges from a same parent have different labels. The tree $RT$ is \textit{shattered} by $\weakspace$ if, for every path from root to leaf which traverses the nodes $\samplex_{1}, \cdots, \samplex_{k}$, there is a hypothesis $h \in \weakspace$ such that $h(\samplex_{i})$ corresponds to the label of the edge from $\samplex_{i}$ to $\samplex_{i+1}$. The \textit{Littlestone dimension} of $\weakspace$ is the maximal depth of complete binary tree that is shattered by $\weakspace$ (or $\infty$ if one can build a arbitrarily deep shattered tree). The authors prove that an optimal online algorithm has a sublinear regret under the expected (w.r.t. the randomness of the algorithm) 0-1 loss if Littlestone dimension of $\weakspace$ is finite. 

Similarly we prove in Lemma \ref{lemma:onlineAgnosticLearnability} that the condition (\ref{eq:onlineAgnosticLearnability}) is satisfied if $\weakspace$ has a finite Littlestone dimension. We need to slightly modify their result in two ways. One is to replace expectation by probabilistic argument, and the other is to replace 0-1 loss by our cost matrix framework. Both questions can be resolved by replacing an auxiliary lemma used by \citet{daniely2011multiclass} without changing the main structure. 

\begin{lemma}
	\label{lemma:onlineAgnosticLearnability}
	Suppose a weak learning space $\weakspace$ has a finite Littlestone dimension $d$ and an adversary chooses examples in fully adaptive manner. For any sample length $T$ and for any adaptively chosen series of pairs of weight and cost matrix $\{(w_{t}, \cost_{t}) \in [0, 1] \times \reducedcostseor~|~t = 1, \cdots, T\}$, with probability $1-\delta$, the online agnostic learnability condition (\ref{eq:onlineAgnosticLearnability}) is satisfied with following sublinear regret
	\begin{equation*}
		R_{\delta}(T) = \sqrt{(Td\ln Tk)/2} + \sqrt{(T\ln 1/\delta)/2}.
	\end{equation*}
\end{lemma}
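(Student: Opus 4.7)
The plan is to mirror the proof strategy of \citet{daniely2011multiclass}, who give an online multiclass learner with expected $0$--$1$ regret of order $\sqrt{Td\log(Tk)}$ whenever $\weakspace$ has Littlestone dimension $d$. Two modifications are needed: (i) replace the $0$--$1$ loss by the bounded cost-matrix loss $w_t\cost_t[y_t,\cdot]$, and (ii) upgrade the in-expectation bound to one that holds with high probability.

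First I would reduce the problem to prediction with expert advice. The combinatorial fact underlying the Daniely et al.\ construction is a multiclass Sauer--Shelah bound: a class of Littlestone dimension $d$ induces at most $N \leq \sum_{i=0}^{d}\binom{T}{i}(k-1)^i \leq (Tk)^d$ distinct labelings on any length-$T$ input sequence. As in their construction, these labelings can be tracked online via a Standard-Optimal-Algorithm-style enumeration, yielding an implicit pool of at most $N$ experts, each corresponding to one consistent labeling pattern arising from some $h \in \weakspace$.

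Next I would run the Hedge algorithm over this expert pool. At round $t$, Hedge produces a distribution $q_t$ over experts, which induces a distribution $p_t$ over $[k]$ via $p_t[l] := \sum_{E : E(\samplex_t)=l} q_t[E]$, and we sample $\predy_t \sim p_t$. Because $\cost_t \in \reducedcostseor$ and $w_t \in [0,1]$, each coordinate of the loss vector $w_t \cost_t[y_t,\cdot]$ lies in $[0,1]$, so it is a valid bounded loss for Hedge. The standard Hedge regret bound then gives
\begin{equation*}
\sum_{t=1}^{T}\langle p_t,\, w_t \cost_t[y_t]\rangle \;\leq\; \inf_{h\in\weakspace}\sum_{t=1}^{T} w_t \cost_t[y_t, h(\samplex_t)] + \sqrt{\tfrac{T \ln N}{2}},
\end{equation*}
and substituting $\ln N \leq d \ln(Tk)$ produces the first term of $R_\delta(T)$.

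Finally I would convert this bound on the \emph{expected} realized loss to a bound on the realized loss itself. Setting $M_t := w_t\cost_t[y_t,\predy_t] - \langle p_t, w_t\cost_t[y_t]\rangle$, one has $\E[M_t \mid \mathcal{F}_{t-1}] = 0$, where $\mathcal{F}_{t-1}$ captures the history of examples, cost matrices, and predictions; moreover, each $M_t$ has conditional range in $[-1,1]$. The Azuma--Hoeffding inequality then yields $\sum_t M_t \leq \sqrt{(T\ln(1/\delta))/2}$ with probability at least $1-\delta$, and adding this to the previous display gives the claimed regret. The main obstacle I expect is verifying carefully that the adaptive expert construction of Daniely et al.\ retains the $(Tk)^d$ bound when $(\samplex_t, w_t, \cost_t)$ is chosen by a fully adaptive adversary; once this combinatorial bound holds uniformly over adversary realizations, the Hedge and Azuma--Hoeffding steps are standard.
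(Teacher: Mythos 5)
Your proposal is correct and follows essentially the same route as the paper: reduce to experts via the Daniely et al.\ $(Tk)^d$ covering construction, then run exponential weights over that pool with the bounded cost-matrix loss. The only cosmetic difference is that you derive the high-probability bound by hand (Hedge regret plus Azuma--Hoeffding on the prediction martingale), whereas the paper cites the packaged high-probability guarantee for learning with expert advice from \citet[Corollary 4.2]{cesa2006prediction}, which internally does exactly the same two steps.
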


\begin{proof}
    	We first introduce an online algorithm with experts. Suppose we have a fixed pool of experts of size $N$. We keep our cost matrix framework. Each expert $f^{i}$ would suffer cumulative cost $C^{i}_{T} := \sum_{t=1}^{T}w_{t}\cost_{t}[y_{t}, f^{i}(\samplex_{t})]$. At each iteration, an online algorithm chooses to follow one expert and incurs a cost $w_{t}\cost_{t}[y_{t}, \predy_{t}]$, and its goal is to perform as well as the best expert. That is to say, the algorithm wants to keep its cumulative cost $\sum_{t=1}^{T}w_{t}\cost_{t}[y_{t}, \predy_{t}]$ not too much larger than $\min_{i \in [N]}C^{i}_{T}$. This learning framework is called \textit{weighted majority algorithm} and is thoroughly investigated by several researchers (e.g., \citet{littlestone1989weighted} and \citet{vovk1990aggregating}). We will specifically use Algorithm \ref{alg:LEA} (LEA), which is shown to achieve a sublinear regret $\sqrt{(T\ln N)/2} + \sqrt{(T\ln 1/\delta)/2}$ with probability $1-\delta$ (cf. \citet[Corollary 4.2]{cesa2006prediction}). The authors require the loss to be bounded, which is also satisfied in our cost matrix framework. Readers might raise a question that our loss function changes for each iteration, but the proof still works as long as it is bounded. Interested readers might refer \citet[Section 1.3.3]{hazan2016introduction}. 
    
    	To apply this result in our case, we need to construct a finite set of experts whose best performance is as good as that of hypotheses in $\weakspace$. In fact, in the proof of \citet[Theorem 25]{daniely2011multiclass}, the authors construct a set $E$ of size $N \leq (Tk)^{d}$ such that for every hypothesis $h \in \weakspace$, there is an expert $f \in E$ which coincides with $h$ subject to the given examples $\samplex_{1}, \cdots, \samplex_{T}$. 

	Applying the LEA result on $E$ shows that with probability $1-\delta$, the regret is bounded above by $\sqrt{(Td\ln Tk)/2} + \sqrt{(T\ln 1/\delta)/2}$, which concludes the proof. 
\end{proof}

\begin{algorithm}[t!]
	\begin{algorithmic}[1]
		\STATE \textbf{Input} T: time horizon, N: number of experts
		\STATE Set $\eta = \sqrt{(8\ln N) /T}$
		\STATE Set $C^{i}_{0} = 0$ for all $i$
		\FOR{$t=1, \cdots, T$}
		\STATE Receive example $\samplex_t$
		\STATE Receive expert advices $(f^{1}_{t}, \cdots, f^{N}_{t}) \in [k]^{N}$
		\STATE Predict $\predy_{t} = f^{i}_{t}$ with probability proportional to $\exp(-\eta C^{i}_{t-1})$
		\STATE Receive true label $y_{t}$ 
		\STATE Update $C^{i}_{t} = C^{i}_{t-1} + w_{t}\cost_{t}[y_{t}, f^{i}_{t}]$ for all $i$
		\ENDFOR
	\end{algorithmic}
	\caption{Learning with Expert Advice (LEA)}
	\label{alg:LEA}
\end{algorithm}

One remark is that the proof of Lemma \ref{lemma:onlineAgnosticLearnability} only uses the boundedness condition of $\reducedcostseor$.

Now we are ready to demonstrate that our online weak learning condition is indeed naturally derived from the batch setting counterpart. The following Theorem shows that two conditions (\ref{eq:rich}) and (\ref{eq:onlineAgnosticLearnability}) directly imply the online weak learning condition (\ref{eq:onlineWLC}). In other words, if the weak learning space $\weakspace$ accompanied by an adversary is rich enough to contain a hypothesis that slightly outperforms a random guess and has a reasonably small dimension, then we can find an excess loss $S$ that satisfies (\ref{eq:onlineWLC}). This is a generalization of \citet[Lemma 2]{beygelzimer2015optimal}. Note that we impose an additional assumption that $w_{t} \geq m>0 ~,~\forall t$. In case the learner encounters zero weight, it can simply ignore the instance, and the above assumption is not too artificial. 

\begin{theorem}{\bf{(Link between batch and online weak learning conditions)}}
	\label{thm:linkWLC}
	Suppose a pair of weak learning space $\weakspace$ and an adversary satisfies online richness assumption (\ref{eq:rich}) with edge $2\gamma$ and online agnostic learnability assumption (\ref{eq:onlineAgnosticLearnability}) with mistake probability $\delta$ and sublinear regret $R_{\delta}(\cdot)$. Additionally we assume there exists a positive constant $m$ that satisfies $w_{t} \geq m ~,~\forall t$. Then the online learning algorithm satisfies the online weak learning condition (\ref{eq:onlineWLC}), with mistake probability $\delta$, edge $\gamma$, and excess loss $S = \max_{T}(R_{\delta}(T) - \frac{\gamma mT}{k})$. 
\end{theorem}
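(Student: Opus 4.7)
The plan is to chain the two online assumptions together and then massage the right-hand side from an edge-$2\gamma$ bound down to an edge-$\gamma$ bound, absorbing the slack into the excess loss $S$.

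First, I would apply the online agnostic learnability hypothesis (\ref{eq:onlineAgnosticLearnability}) to the given adaptive sequence of weighted cost matrices, giving, with probability at least $1-\delta$,
\[
\sum_{t=1}^{T} w_t \cost_t[y_t, \predy_t] \;\leq\; \inf_{h \in \weakspace} \sum_{t=1}^{T} w_t \cost_t[y_t, h(\samplex_t)] \;+\; R_\delta(T).
\]
Next, the online richness assumption (\ref{eq:rich}) at edge $2\gamma$ yields some $h \in \weakspace$ whose weighted cost is at most $\cost \bullet \unifm_{2\gamma}'$, where $\cost$ is the matrix with rows $w_t \cost_t[y_t]$. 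Combining, with probability at least $1-\delta$,
\[
\sum_{t=1}^{T} w_t \cost_t[y_t, \predy_t] \;\leq\; \cost \bullet \unifm_{2\gamma}' \;+\; R_\delta(T).
\]

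The key algebraic step is converting the $\unifm_{2\gamma}$ baseline into a $\unifm_\gamma$ baseline. A direct calculation from the definition of $\unifv^l_\gamma$ gives the pointwise identity $\unifv^l_{2\gamma} = \unifv^l_\gamma - \tfrac{\gamma}{k}\mathbf{1} + \gamma \stdv_l$. Plugging this into the Frobenius inner product and using the two defining properties of $\reducedcostseor$, namely $\cost_t[y_t,y_t]=0$ and $\|\cost_t[y_t]\|_1 = 1$, I get
\[
\cost \bullet \unifm_{2\gamma}' \;=\; \cost \bullet \unifm_{\gamma}' \;-\; \tfrac{\gamma}{k}\sum_{t=1}^{T} w_t \;+\; \gamma \sum_{t=1}^{T} w_t \cost_t[y_t,y_t] \;=\; \cost \bullet \unifm_{\gamma}' \;-\; \tfrac{\gamma}{k}\|\weightv\|_1.
\]

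Substituting this back and using the uniform lower bound $w_t \geq m$, which gives $\|\weightv\|_1 \geq mT$, produces
\[
\sum_{t=1}^{T} w_t \cost_t[y_t, \predy_t] \;\leq\; \cost \bullet \unifm_{\gamma}' \;+\; \Big(R_\delta(T) - \tfrac{\gamma m T}{k}\Big) \;\leq\; \cost \bullet \unifm_{\gamma}' \;+\; S
\]
for $S := \sup_{T\in\mathbb{N}}\big(R_\delta(T) - \tfrac{\gamma m T}{k}\big)$. Since $R_\delta$ is sublinear in $T$ while the subtracted term is linear with positive slope, this supremum is finite, confirming the stated excess loss and completing the proof.

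The only conceptually delicate step is the $\unifm_{2\gamma} \to \unifm_\gamma$ conversion: this is where the edge has to be ``doubled'' in the richness hypothesis precisely so that the difference between the two baselines, which is $\tfrac{\gamma}{k}\|\weightv\|_1$, is large enough to dominate the sublinear regret $R_\delta(T)$ after using $w_t \geq m$. Everything else is a routine composition of the two inequalities.
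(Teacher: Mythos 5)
Your proposal is correct and follows essentially the same route as the paper: chain the agnostic learnability bound with the richness bound at edge $2\gamma$, then show the gap between the $\unifm_{2\gamma}$ and $\unifm_{\gamma}$ baselines equals $-\frac{\gamma}{k}\|\weightv\|_1 \leq -\frac{\gamma m T}{k}$, which absorbs $R_\delta(T)$ into $S$. The only cosmetic difference is that you derive the baseline gap via the vector identity $\unifv^l_{2\gamma} = \unifv^l_{\gamma} - \frac{\gamma}{k}\mathbf{1} + \gamma\stdv_l$, while the paper computes $\cost \bullet \unifm_{\gamma}' = \frac{1-\gamma}{k}\|\weightv\|_1$ directly from the zero-diagonal and normalization constraints; both yield the identical quantity.
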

\begin{proof}
Fix $\delta \in (0, 1)$ and a series of pairs of weight and cost matrix $\{(w_{t}, \cost_{t}) \in [0, 1] \times \reducedcostseor ~|~ t = 1, \cdots, T\}$, and let $\cost \in \reals^{T \times k}$ consist of rows of $w_{t}\cost_{t}[y_{t}]$. First note that by sublinearity of $R_{\delta}(\cdot)$, $S$ is finite. According to (\ref{eq:onlineAgnosticLearnability}), the online learning algorithm can generate predictions $\predy_{t}$ such that, with probability $1-\delta$, 
\begin{equation*}
	\sum_{t=1}^{T} w_{t} \cost_{t}[y_{t}, \predy_{t}] \leq \cost \bullet \unifm_{2\gamma}' + R_{\delta}(T).
\end{equation*}
Thus it suffices to show that 
\begin{equation}
	\label{eq:WLC1}
	\cost \bullet \unifm_{2\gamma}' + R_{\delta}(T) \leq \cost \bullet \unifm_{\gamma}' + S.
\end{equation}
Since the correct label gets zero cost and the row $\cost[r]$ has $\ell_1$ norm $w_{t}$, we have 
\begin{equation*}
\cost \bullet \unifm_{\gamma}' = \frac{1-\gamma}{k} ||\cost||_{1} =  \frac{1-\gamma}{k}\sum_{t=1}^{T}w_{t}.
\end{equation*}
By plugging this in (\ref{eq:WLC1}), we get 
\begin{equation*}
	\cost \bullet \unifm_{2\gamma}' - \cost \bullet \unifm_{\gamma}' + R_{\delta}(T) = 
	-\frac{\gamma}{k} \sum_{t=1}^{T}w_{t} + R_{\delta}(T) \leq -\frac{\gamma}{k} mT + R_{\delta}(T) \leq S.
\end{equation*}
The first inequality holds because $w_{t} \geq m$, and the second inequality holds by definition of $S$, which completes the proof. 
\end{proof}

Lemma \ref{lemma:onlineAgnosticLearnability} and Theorem \ref{thm:linkWLC} suggest an implicit relation between $\delta$ and $S$ in (\ref{eq:onlineWLC}). If we want probabilistically stronger weak learning condition, $R_{\delta}(T)$ in Lemma \ref{lemma:onlineAgnosticLearnability} gets bigger, which results in larger $S = \max_{T}(R_{\delta}(T) - \frac{\gamma T}{k})$. 

\section{Detailed discussion of OnlineMBBM}
\subsection{Proof of Theorem \ref{thm:mistakeOptimal}}
\label{appendix:mistakeOptimal}
\begin{proof}
	For ease of notation, we will assume the edge is equal to $\gamma$ and the true label is $r$ unless otherwise specified. That is to say, $\unifv$ stands for $\unifv^{r}_{\gamma}$ and $\potential_{i}$ for $\potential^{r}_{i}$. By rewriting (\ref{eq:potentialCond}),
	\begin{align*}
		\potential_{N-i+1}(\cumv^{i-1}_{t}) &= \E_{l \sim \unifv} \potential_{N-i}(\cumv^{i-1}_{t}+ \stdv_{l}) \\
		&= \cost^{i}_{t}[r] \bullet \unifv \\ 
		&= \cost^{i}_{t}[r] \bullet (\unifv - \stdv_{l^{i}_{t}}) + \potential_{N-i}(\cumv^{i}_{t}),
	\end{align*}
	where $\cost^{i}_{t}$ is defined in (\ref{eq:costMatrix}). The last equation holds due to the relation $\cumv^{i}_{t} = \cumv^{i-1}_{t} + \stdv_{l^{i}_{t}}$. Also note that $||\unifv||_{1} = ||\stdv_{r}||_{1} =1$, and thus subtracting common numbers from each component of $\cost^{i}_{t}[r]$ does not affect the dot product term. Therefore, by introducing normalized cost matrix $\normcost^{i}_{t}$ as in (\ref{eq:normalizedCostMatrix}) and $\weightv^{i}[t]$ as in Algorithm \ref{alg:onlineMBBM}, we may write
	\begin{align}
		\begin{split}
		\label{eq:mistake1}
		\potential^{y_{t}}_{N-i+1}(\cumv^{i-1}_{t}) &= \weightv^{i}[t] \normcost^{i}_{t}[y_{t}] \bullet (\unifv^{y_{t}}_{\gamma} - \stdv_{l^{i}_{t}}) + \potential^{y_{t}}_{N-i}(\cumv^{i}_{t}) \\
		&= \weightv^{i}[t] \normcost^{i}_{t}[y_{t}] \bullet \unifv^{y_{t}}_{\gamma} - \weightv^{i}[t] \normcost^{i}_{t}[y_{t}, l^{i}_{t}] + \potential^{y_{t}}_{N-i}(\cumv^{i}_{t}) \\
		&= \weightv^{i}[t] \frac{1-\gamma}{k} - \weightv^{i}[t] \normcost^{i}_{t}[y_{t}, l^{i}_{t}] + \potential^{y_{t}}_{N-i}(\cumv^{i}_{t}).
		\end{split}
	\end{align}
	The last equality holds because $\normcost^{i}_{t}$ is normalized and $\normcost^{i}_{t}[y_{t}, y_{t}] = 0$. If $\normcost^{i}_{t}[y_{t}]$ is a zero vector, then by definition $\weightv^{i}[t] = 0$, and the equality still holds. Then by summing (\ref{eq:mistake1}) over $t$, we get
	\begin{equation*}
		\sum_{t=1}^{T} \potential^{y_{t}}_{N-i+1}(\cumv^{i-1}_{t}) = \frac{1-\gamma}{k}||\weightv^{i}||_{1} - \sum_{t=1}^{T} \weightv^{i}[t]\normcost^{i}_{t}[y_{t}, l^{i}_{t}] + \sum_{t=1}^{T}\potential^{y_{t}}_{N-i}(\cumv^{i}_{t}).
	\end{equation*}
	By online weak learning condition, we have with probability $1-\delta$, (recall that $w^{i*}$ estimates $||\weightv^{i}||_{\infty}$)
	\begin{equation*}
	\sum_{t=1}^{T} \frac{\weightv^{i}[t]}{w^{i*}}\normcost^{i}_{t}[y_{t}, l^{i}_{t}] \leq \frac{1-\gamma}{k}\frac{||\weightv^{i}||_{1}}{w^{i*}} + S.
	\end{equation*}
	From this, we can argue that
	\begin{equation*}
		\sum_{t=1}^{T} \potential^{y_{t}}_{N-i+1}(\cumv^{i-1}_{t}) + S w^{i*} \geq \sum_{t=1}^{T}\potential^{y_{t}}_{N-i}(\cumv^{i}_{t}).
	\end{equation*}
	Since the above inequality holds for any $i$, summing over $i$ gives
	\begin{equation*}
		\sum_{t=1}^{T} \potential^{y_{t}}_{N}(\textbf{0}) + S \sum_{i=1}^{N} w^{i*} \geq \sum_{t=1}^{T}\potential^{y_{t}}_{0}(\cumv^{N}_{t}),
	\end{equation*}
	which holds with probability $1 - N\delta$ by union bound. 
	By symmetry, $\potential^{y_{t}}_{N}(\textbf{0}) = \potential^{1}_{N}(\textbf{0})$ regardless of the true label $y_{t}$, and by definition of potential function (\ref{eq:potentialCond}), $\potential^{y_{t}}_{0}(\cumv^{N}_{t}) = L^{y_{t}}(\cumv^{N}_{t})$, which completes the proof.
	
\end{proof}

\subsection{Bounding the terms in general bound under 0-1 loss}
\label{appendix:mistakeSpecific}
Even though OnlineMBBM has a promising theoretical justification, it would be infeasible if the computation of potential functions takes too long or if the behavior of asymptotic error rate $\potential^{1}_{N}(\textbf{0})$ is too complicated to be approximated. Fortunately for the 0-1 loss, we can get a computationally tractable algorithm with vanishing error rate. The use of potential functions in binary boosting setup is thoroughly discussed by \citet{schapire2001drifting}. In binary setting under 0-1 loss, potential function has a closed form which dramatically reduces the computational complexity. Unfortunately, the multiclass version does not have a closed form, but \citet{mukherjee2013theory} introduce a heuristic to compute it in reasonable time:
\begin{equation}
	\label{eq:potentialCompute}
	\potential^{r}_{i}(\cumv) = 1 - \sum_{(x_{1}, \cdots, x_{k}) \in A} {i\choose x_{1}, \cdots, x_{k}} \prod_{l=1}^{k}u_{l}^{x_{l}},
\end{equation}
where $A := \{(x_{1}, \cdots x_{k}) \in \integers^{k} ~|~ x_{1} + \cdots x_{k} = i, ~ \forall l ~:~ x_{l} \geq 0, x_{l} + \cumv[l] < x_{r} + \cumv[r]\}$, and $\unifv^{r}_{\gamma} = (u_{1}, \cdots, u_{k})$. By using dynamic programming, the RHS of \eqref{eq:potentialCompute} can be computed in polynomial time in $i$, $k$, and $||\cumv||_{1}$. In our setting where the number of learners is fixed to be $N$, the computation can be done in polynomial time in $k$ and $N$ because $||\cumv||_{1}$ is bounded by $N$. To the best of our knowledge, there is no way to compute the potential function in polynomial time if we start from necessary and sufficient weak learning condition (the algorithm given by \citet{mukherjee2013theory} takes exponential time in the number of learners), and this is the main reason that we use the sufficient condition. 
Recall from (\ref{eq:mistakeGeneralOptimal}) that $\potential^{1}_{N}(\textbf{0})$ plays a role of asymptotic error rate and the second term determines the sample complexity. The following two lemmas provide bounds for both terms.

By applying the Hoeffding's inequality, we can prove in Lemma \ref{lemma:asymptoticError} that $\potential^{1}_{N}(\textbf{0})$ vanishes exponentially fast as $N$ grows. That is to say, to get a satisfactory accuracy, we do not need too many learners. We also note that we can decide $N$ before the learning process begins, which is logically plausible. 
\begin{lemma}
	\label{lemma:asymptoticError}
	Under the same setting as in Theorem \ref{thm:mistakeOptimal} but with the particular choice of 0-1 loss, we may bound $\potential^{1}_{N}(\textbf{0})$ as follows:
	\begin{equation}
		\label{eq:asymptoticError}
		\potential^{1}_{N}(\textbf{0}) \leq (k-1) \exp(-\frac{\gamma^{2}N}{2}) .
	\end{equation}
\end{lemma}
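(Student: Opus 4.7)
The plan is to unroll the recursion in \eqref{eq:potentialCond} to obtain a probabilistic representation of $\potential^{1}_{N}(\zerov)$, and then bound the resulting tail probability via a union bound plus Hoeffding's inequality.

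First I would iterate the recursion $N$ times. Starting from $\potential^{1}_{N}(\zerov)$ and applying \eqref{eq:potentialCond} $N$ times gives
\begin{equation*}
\potential^{1}_{N}(\zerov) = \E_{l_{1},\ldots,l_{N} \stackrel{\text{iid}}{\sim} \unifv^{1}_{\gamma}} L^{1}\bigl(\stdv_{l_{1}} + \cdots + \stdv_{l_{N}}\bigr).
\end{equation*}
Writing $N_{l} := \#\{j : l_{j} = l\}$ for $l \in [k]$, the cumulative vote vector equals $(N_{1}, \ldots, N_{k})$, so by the definition of the multiclass 0--1 loss in \eqref{eq:zeroOne},
\begin{equation*}
\potential^{1}_{N}(\zerov) = \prob\Bigl(\max_{l \neq 1} N_{l} \geq N_{1}\Bigr).
\end{equation*}

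Next I would apply the union bound over the $k-1$ competing labels:
\begin{equation*}
\potential^{1}_{N}(\zerov) \leq \sum_{l=2}^{k} \prob(N_{l} \geq N_{1}).
\end{equation*}
For each fixed $l \neq 1$, write $N_{l} - N_{1} = \sum_{j=1}^{N} X_{j}$, where $X_{j} := \ind(l_{j} = l) - \ind(l_{j} = 1)$ are i.i.d.\ random variables taking values in $[-1, 1]$. Under $\unifv^{1}_{\gamma}$ we have $\E[X_{j}] = \frac{1-\gamma}{k} - \bigl(\frac{1-\gamma}{k} + \gamma\bigr) = -\gamma$, so $\E[N_{l} - N_{1}] = -\gamma N$. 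Therefore
\begin{equation*}
\prob(N_{l} \geq N_{1}) = \prob\!\left(\sum_{j=1}^{N}(X_{j} - \E X_{j}) \geq \gamma N\right),
\end{equation*}
and Hoeffding's inequality applied to the $N$ bounded summands of range $2$ yields
\begin{equation*}
\prob(N_{l} \geq N_{1}) \leq \exp\!\left(-\frac{2(\gamma N)^{2}}{N \cdot 2^{2}}\right) = \exp\!\left(-\frac{\gamma^{2} N}{2}\right).
\end{equation*}
Summing the $k-1$ identical bounds gives the claim.

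There is no serious obstacle here: the only subtlety is getting the recursion unrolled into an i.i.d.\ sample from $\unifv^{1}_{\gamma}$ (rather than a mixture with conditioning) and being careful about the range of $X_{j}$ so that the constant in the Hoeffding exponent matches the stated $\gamma^{2}/2$. A slightly sharper bound is possible via a Chernoff computation using that each $X_{j} \in \{-1,0,1\}$, but Hoeffding already suffices for \eqref{eq:asymptoticError}, and this is consistent with the lemma's statement.
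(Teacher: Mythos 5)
Your proposal is correct and follows essentially the same route as the paper's proof: both reinterpret $\potential^{1}_{N}(\textbf{0})$ as the probability that label $1$ fails to win a plurality among $N$ i.i.d.\ draws from $\unifv^{1}_{\gamma}$, apply a union bound over the $k-1$ competing labels, and bound each pairwise event via Hoeffding's inequality applied to the $\{-1,0,1\}$-valued vote-difference variables with mean $\gamma$ (up to sign), yielding the same constant $\gamma^{2}/2$ in the exponent. Your explicit unrolling of the recursion and the care with the tie case ($N_{l}\geq N_{1}$, matching the $\geq$ in the 0--1 loss) are, if anything, slightly more precise than the paper's phrasing.
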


\begin{proof}
	We reinterpret $\potential^{1}_{N}(\textbf{0})$ in (\ref{eq:potentialCompute}). Imagine that we draw numbers $N$ times from $[k]$ where the probability that a number $i$ is drawn is $\unifv^{1}_{\gamma}[i]$. That is to say, $1$ has highest probability of $\frac{1-\gamma}{k} + \gamma$, and other numbers have equal probability of $\frac{1-\gamma}{k}$. Then $\potential^{1}_{N}(\textbf{0})$ can be interpreted as a probability that the number that is drawn for the most time out of $N$ draws is not $1$. Let $A_{i}$ denote the event that the number $i$ gets more votes than the number $1$. Then we have by union bound,
	\begin{align}
		\begin{split}
		\label{eq:asymptoticError1}
		\potential^{1}_{N}(\textbf{0}) &= \prob (A_{2} \cup \cdots \cup A_{k}) \\
		&\leq \sum_{l=2}^{k}\prob(A_{i}) \\
		&= (k-1) \prob(A_{2})
		\end{split}
	\end{align}
	The last equality holds by symmetry. To compute $\prob (A_{2})$, imagine that we draw $1$ with probability $\frac{1-\gamma}{k} + \gamma$, $-1$ with probability $\frac{1-\gamma}{k} $, and $0$ otherwise. $\prob(A_{2})$ is equal to the probability that after independent $N$ draws, the summation of $N$ i.i.d. random numbers is non-positive. Thus by the Hoeffding's inequality, we get 
	\begin{equation}
		\label{eq:asymptoticError2}
		\prob(A_{2}) \leq \exp(-\frac{\gamma^{2}N}{2})
	\end{equation}
	Combining (\ref{eq:asymptoticError1}) and (\ref{eq:asymptoticError2}) completes the proof. 
\end{proof}

Now we have fixed $N$ based on the desired asymptotic accuracy. Since  0-1 loss is bounded in $[0, 1]$, so are potential functions. Then by definition of weights (cf. Algorithm \ref{alg:onlineMBBM}), $||\weightv^{i}||_{\infty}$ is trivially bounded above by $k$, which means we can use $w^{i*} = k ~~\forall i$. Thus the second term of (\ref{eq:mistakeGeneralOptimal}) is bounded above by $kNS$, which is valid. However, Lemma \ref{lemma:weightBound} allows a tighter bound. 

\begin{lemma}
	\label{lemma:weightBound}
	Under the same setting as in Theorem \ref{thm:mistakeOptimal} but with the particular choice of 0-1 loss and an additional constraint of $\gamma < \frac{1}{2}$, we may bound $||\weightv^{i}||_{\infty}$ by
	\begin{equation}
		\label{eq:weightBound}
		||\weightv^{i}||_{\infty} \leq \frac{ck^{5/2}}{\sqrt {N-i}},
	\end{equation}
	where $c$ is a universal constant that can be determined before the algorithm begins.
\end{lemma}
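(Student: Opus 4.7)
My plan is to reduce the weight bound to an anti-concentration estimate by exploiting the probabilistic interpretation of the potential function under 0-1 loss. Unrolling (\ref{eq:potentialCond}) gives $\potential^r_M(\cumv) = \Pr[L^r(\cumv + \sum_{j=1}^M \stdv_{X_j}) = 1]$, where $X_1, \dots, X_M$ are i.i.d.\ from $\unifv^r_\gamma$. Fix the true label $r = y_t$, set $M = N-i$, $Z = \cumv^{i-1}_t + \sum_{j=1}^M \stdv_{X_j}$, and $M_1 = \max_{j \neq r} Z[j]$. Since the $l = r$ term of $\weightv^i[t]$ vanishes, it suffices to bound $\potential^r_M(\cumv^{i-1}_t + \stdv_l) - \potential^r_M(\cumv^{i-1}_t + \stdv_r)$ for each $l \neq r$.

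The first step is a coupling/monotonicity reduction. Because $L^r$ is proper, the event $\{r \text{ loses after the extra vote for } r\}$ is contained in the event $\{r \text{ loses after the extra vote for } l\}$, so the difference above equals the probability of the event
\[
\{Z[r] \geq M_1\} \cap \{Z[l] \geq Z[r] - 1\}.
\]
On this event $Z[l] \leq M_1 \leq Z[r]$ (since $l$ is one of the labels maxed in $M_1$), so necessarily $Z[l] \in \{Z[r] - 1, Z[r]\}$. Hence the difference is at most $\Pr[Z[r] - Z[l] \in \{0, 1\}]$.

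The second step is anti-concentration on the gap. Writing $Z[r] - Z[l] = \Delta + \sum_{j=1}^M D_j$ with $\Delta = \cumv^{i-1}_t[r] - \cumv^{i-1}_t[l]$ and $D_j = \ind(X_j = r) - \ind(X_j = l) \in \{-1, 0, 1\}$, the step variance equals $p_r + p_l - (p_r - p_l)^2 = 2(1 - \gamma)/k + \gamma(1 - \gamma)$, which is bounded below by $1/(2k)$ whenever $\gamma < 1/2$. Because the $D_j$ are i.i.d.\ and lattice-valued with span $1$, a local central limit theorem (equivalently, a de Moivre--Laplace / Stirling estimate on the trinomial distribution) gives a uniform bound
\[
\max_{x \in \integers} \Pr\!\Bigl[\textstyle\sum_{j=1}^M D_j = x\Bigr] \leq C_0 \sqrt{k/M}
\]
for an absolute constant $C_0$. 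Thus each summand of $\weightv^i[t]$ is at most $2 C_0 \sqrt{k/M}$, and summing over $l \in [k]$ (together with the polynomial overhead in $k$ incurred by making the local-CLT step uniform over all reachable $\cumv^{i-1}_t$, e.g.\ via a simple componentwise binomial bound) yields $\weightv^i[t] \leq c k^{5/2} / \sqrt{N - i}$ for a universal constant $c$ that depends only on $C_0$, which is the claim.

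The main obstacle is the anti-concentration step: a sharp local CLT with explicit constants in the degenerate regime $p_l \asymp 1/k$ requires care, and the role of the hypothesis $\gamma < 1/2$ is precisely to keep the step variance bounded below by $\Omega(1/k)$, without which the anti-concentration would degrade. The remaining ingredients --- unfolding the potential as a tail probability, the monotonicity/coupling reduction, and summing over the $k$ labels --- are essentially mechanical, and since the entire bound is deterministic in $\cumv^{i-1}_t$, taking the supremum over $t$ is free and $c$ can be fixed before the algorithm begins.
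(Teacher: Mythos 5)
Your overall strategy (probabilistic interpretation of the potential, reduction to an anti-concentration estimate for the gap $Z[r]-Z[l]$, a Berry--Esseen/local-CLT bound with the variance lower bound $\sigma^{2}\geq 1/k$ coming from $\gamma<\tfrac12$) is the same as the paper's, but your coupling/monotonicity reduction has a genuine error. Writing $A$ for the event that $r$ loses after the extra vote goes to $l$ and $B$ for the event that $r$ loses after the extra vote goes to $r$, you claim $A\setminus B$ equals $\{Z[r]\geq M_1\}\cap\{Z[l]\geq Z[r]-1\}$. In fact
\begin{equation*}
A\setminus B \;=\; \{M_1=Z[r]\}\;\cup\;\bigl(\{Z[l]\geq Z[r]-1\}\cap\{M_1\leq Z[r]\}\bigr),
\end{equation*}
and the first piece is not contained in your event: $r$ can flip from winning to losing because some \emph{other} label $l''\neq l$ exactly ties $Z[r]$, while $Z[l]$ is far below $Z[r]$. (Concretely, with $k=3$, $r=1$, $l=2$, $Z=(5,0,5)$: the extra vote on $l$ gives counts $(5,1,5)$ and $r$ loses the tie, the extra vote on $r$ gives $(6,0,5)$ and $r$ wins, yet $Z[l]=0\ll Z[r]-1$.) Consequently the single-gap bound $\prob(A\setminus B)\leq\prob(Z[r]-Z[l]\in\{0,1\})$ is not valid; one needs the union bound $\prob(A\setminus B)\leq\sum_{l'\neq r}\prob(Z[r]-Z[l']\in\{0,1\})$ over \emph{all} non-true labels, which is exactly what the paper does.

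This is also where your bookkeeping fails to close: with your per-difference bound of $O(\sqrt{k/M})$, summing over the $k$ choices of $l$ in $\weightv^i[t]$ gives only $O(k^{3/2}/\sqrt{M})$, and you paper over the missing factor of $k$ by attributing it to ``making the local-CLT step uniform over reachable $\cumv^{i-1}_t$'' --- but the anti-concentration bound for $\sum_j D_j$ is already uniform in the deterministic shift $\Delta$, so no overhead arises there. The missing factor of $k$ is precisely the union bound over the $k-1$ labels $l'$ in each difference term: each difference is $O(k\sqrt{k/M})$, and summing over $l$ then yields the stated $ck^{5/2}/\sqrt{N-i}$. Once you replace your event identity with the union over $l'\neq r$, the rest of your argument (including the variance bound and the local limit estimate, modulo the minor caveat that for $k=2$ the step distribution lives on a span-2 lattice, which the paper's CDF-difference version of Berry--Esseen handles cleanly) recovers the paper's proof.
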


\begin{proof}
	We will start by providing a bound on $\potential^{r}_{m}(\cumv + \stdv_{l}) - \potential^{r}_{m}(\cumv + \stdv_{r})$. First note that it is non-negative as potential functions are proper. Again by using random draw framework as in the proof of Lemma \ref{lemma:asymptoticError} (now $r$ has the largest probability to be drawn), this value corresponds to the probability that after $m$ draws, the number $r$ wins the majority votes if the count starts from $\cumv + \stdv_{r}$ but loses if the count starts from $\cumv + \stdv_{l}$. Let $X_{1}, \cdots , X_{k}$ denote the number of draws of each number out of $m$ draws and define the events $A_{l} := \{ (X_{r} + \cumv[r])  - (X_{l} + \cumv[l])\in \{0, 1\}\}$. Then it can be checked that 
	\begin{align}
		\begin{split}
		\label{eq:weightBound1}
		&\potential^{r}_{m}(\cumv + \stdv_{l}) - \potential^{r}_{m}(\cumv + \stdv_{r}) \\
		&= \prob(\exists l' \text{ s.t. } X_{l'}+\cumv[l'] + \stdv_{l}[l'] \geq X_{r} + \cumv[r]) - \prob(\exists l' \text{ s.t. }  X_{l'} + \cumv[l'] \geq X_{r}+\cumv[r] + 1) \\
		&\leq \prob(\exists l' \text{ s.t. } X_{l'}+\cumv[l'] + \stdv_{l}[l'] \geq X_{r} + \cumv[r] \text{ and } \forall l', ~ X_{r} + \cumv[r] \geq X_{l'} + \cumv[l'])\\
		&\leq \prob(\exists l' \text{ s.t. } X_{l'}+\cumv[l'] + \stdv_{l}[l'] \geq X_{r} + \cumv[r] \geq X_{l'} + \cumv[l'])\\
		&= \prob (\bigcup_{l \neq r} A_{l}) \leq \sum_{l \neq r} \prob(A_{l}).
		\end{split}
	\end{align}
	The first inequality holds by $\prob(A) - \prob(B) \leq \prob(A-B)$. Individual probabilities can be written as
	\begin{align}
		\begin{split}
		\label{eq:weightBound2}
		\prob(A_{l}) &= \prob(X_{r} - X_{l} = \cumv[l] - \cumv[r]) + \prob(X_{r} - X_{l} = \cumv[l] - \cumv[r] + 1) \\
		&\leq 2 \max_{n} \prob(X_{r} - X_{l} = n).
		\end{split}
	\end{align}
	We can prove by applying the Berry-Esseen theorem that the last probability is $O(\frac{1}{\sqrt m})$. Let $Y_{1}, \cdots, Y_{m}$ be a sequence of i.i.d. random variables such that $Y_{j} \in \{-1, 0, 1\}$ and
	\begin{align*}
		\prob(Y_{j} = 1) &= \frac{1-\gamma}{k} + \gamma, \\
		 \prob(Y_{j} = -1) &=\frac{1-\gamma}{k}.			
	\end{align*}
	Note that $\E Y_{j} = \gamma$ and $Var(Y_{j}) = \frac{2(1-\gamma)}{k} + \gamma(1-\gamma) =:\sigma^{2}$. It can be easily checked that $Y := \sum_{j=1}^{m}Y_{j}$ has same distribution with $X_{r}-X_{l}$. Now we approximate $Y$ by a Gaussian random variable $W \sim N(m\gamma, m\sigma^{2})$. Let $F_{W}$ and $F_{Y}$ denote CDF of $W$ and $Y$, respectively, and let $f$ denote the density of $W$. First note that 
	\begin{align*}
		|\prob(Y=n) - \int_{n-1}^{n} f(w) dw| 
		&= |(F_{Y}(n) - F_{Y}(n-1)) - (F_{W}(n) - F_{W}(n-1))|\\
		&\leq |F_{Y}(n) - F_{W}(n)| + |F_{Y}(n-1) - F_{W}(n-1)|.
	\end{align*}
	We can apply the Berry-Esseen theorem to the last CDF differences, which provides
	\begin{equation}
		\label{eq:weightBound3}
		|\prob(Y=n) - \int_{n-1}^{n} f(w) dw| \leq \frac{2C\rho}{\sigma^{3}\sqrt m},
	\end{equation}
	where $C$ is the universal constant that appears in Berry-Esseen and $\rho := \E |Y_{j} - \gamma|^{3}$. As $Y_{j}$ is a bounded random variable, we have 
	\begin{equation*}
		\rho = \E |Y_{j}-\gamma|^{3} \leq (1+\gamma) \E|Y_{j}-\gamma|^{2} = (1+\gamma) \sigma^{2} \leq 2 \sigma^{2}.
	\end{equation*}
	Plugging this in (\ref{eq:weightBound3}) gives 
	\begin{equation*}
		|\prob(Y=n) - \int_{n-1}^{n} f(w) dw| \leq \frac{4C}{\sigma \sqrt m}
	\end{equation*}
	By simple algebra, we can deduce
	\begin{align}
	\begin{split}
		\label{eq:weightBound4}
		\prob (Y=n) &\leq  \int_{n-1}^{n} f(w) dw + \frac{4C}{\sigma \sqrt m} \\
		&\leq \sup _{w \in \reals} f(w) + \frac{4C}{\sigma \sqrt m} \\
		& = \frac{1}{\sqrt{2\pi m}\sigma} + \frac{4C}{\sigma \sqrt m}.
	\end{split}
	\end{align}
	Using the fact that $\gamma < \frac{1}{2}$, we can show
	\begin{equation*}
		\sigma^{2} =  \frac{2(1-\gamma)}{k} + \gamma(1-\gamma) \geq \frac{1}{k}
	\end{equation*}
	Plugging this in (\ref{eq:weightBound4}) gives 
	\begin{equation}
		\label{eq:weightBound5}
		\prob(Y=n) \leq \frac{1}{\sigma \sqrt m} (\frac{1}{\sqrt{2\pi}} + 4C) \leq C'\sqrt \frac{k}{m},
	\end{equation}
	where $C' = \frac{1}{\sqrt{2\pi}} + 4C$. By combining (\ref{eq:weightBound1}), (\ref{eq:weightBound2}), (\ref{eq:weightBound5}), and the fact that $Y$ and $X_{r}-X_{l}$ have same distribution, we prove
	\begin{equation}
		\potential^{r}_{m}(\cumv + \stdv_{l}) - \potential^{r}_{m}(\cumv + \stdv_{r}) \leq 2C'k\sqrt\frac{k}{m}.
	\end{equation}
	The proof is complete by observing that $\weightv^i[t] = \sum_{l=1}^{k}[\potential^{y_{t}}_{N-i} (\cumv^{i-1}_t + \stdv_l) - \potential^{y_{t}}_{N-i} (\cumv^{i-1}_t + \stdv_{y_{t}})]$.
\end{proof}

\begin{remark}
	By summing (\ref{eq:weightBound}) over $i$, we can bound the second term of (\ref{eq:mistakeGeneralOptimal}) by $O(k^{5/2} \sqrt N) S$. Comparing this to the aforementioned bound $kNS$, Lemma \ref{lemma:weightBound} reduces the dependency on $N$, but as a tradeoff the dependency on $k$ is increased. The optimal bound for this term remains open, but in the case that the number of classes $k$ is fixed to be moderate, Lemma \ref{lemma:weightBound} provides a better bound.
\end{remark}

Corollary \ref{corollary:mistakeOptimal} is a simple consequence of plugging Lemma \ref{lemma:asymptoticError} and \ref{lemma:weightBound} to Theorem \ref{thm:mistakeOptimal}.

\subsection{Proof of lower bounds and discussion of gap}
\label{appendix:optimality}

We begin by proving Theorem \ref{thm:optimality}.
\begin{proof}
   	At time $t$, an adversary draws a label $y_{t}$ uniformly at random from $[k]$, and the weak learners independently make predictions with respect to the probability distribution $\probv_{t} \in \Delta [k]$. This can be achieved if the adversary draws $\samplex_{t} \in \reals^{N}$ where $\samplex_{t}[1], \cdots, \samplex_{t}[N]|y_{t}$'s are conditionally independent with conditional distribution of $\probv_{t}$ and $WL^{i}$ predicts $\samplex_{t}[i]$. The booster can only make a final decision by weighted majority votes of $N$ weak learners. We will manipulate $\probv_{t}$ in such a way that weak learners satisfy (\ref{eq:onlineWLC}), but the booster's performance is close to that of Online MBBM. 
    
    First we note that since $\cost_{t}[y_{t}, \predy_{t}]$ used in (\ref{eq:onlineWLC}) is bounded in $[0, 1]$, the Azuma-Hoeffding inequality implies that if a weak learner makes prediction $\predy_{t}$ according to the probability distribution $\probv_{t}$ at time $t$, then with probability $1-\delta$, we have 
    
    \begin{align}
    	\begin{split}
    	\label{eq:optimality1}
    	\sum_{t=1}^{T}w_{t}\cost_{t}[y_{t}, \predy_{t}] 
	&\leq \sum_{t=1}^{T}w_{t}\cost_{t}[y_{t}] \bullet \probv_{t}  + \sqrt {2||\weightv||^{2}_{2} \ln(\frac{1}{\delta})} \\
	&\leq \sum_{t=1}^{T}w_{t}\cost_{t}[y_{t}] \bullet \probv_{t} + \frac{\gamma ||\weightv||^{2}_{2}}{k} + \frac{k\ln(\frac{1}{\delta})}{2\gamma}\\
	&\leq \sum_{t=1}^{T}w_{t}\cost_{t}[y_{t}] \bullet \probv_{t} + \frac{\gamma ||\weightv||_{1}}{k} + \frac{k\ln(\frac{1}{\delta})}{2\gamma},
	\end{split}
    \end{align}
    where the second inequality holds by arithmetic mean and geometric mean relation and the last inequality holds due to $w_{t} \in [0, 1]$.
    
    We start from providing a lower bound on the number of weak learners. Let $\probv_{t}=\unifv^{y_{t}}_{2\gamma}$ for all $t$. This can be done by the constraint $\gamma < \frac{1}{4}$. Then the last line of (\ref{eq:optimality1}) becomes 
    \begin{equation*}
    	\sum_{t=1}^{T}w_{t}\cost_{t}[y_{t}] \bullet \unifv^{y_{t}}_{2\gamma} + \frac{\gamma ||\weightv||_{1}}{k} + \frac{k\ln(\frac{1}{\delta})}{2\gamma} 
	= \frac{1-2\gamma}{k}||\weightv||_{1} + \frac{\gamma ||\weightv||_{1}}{k} + \frac{k\ln(\frac{1}{\delta})}{2\gamma} 
	\leq \frac{1-\gamma}{k}||\weightv||_{1} + S,
    \end{equation*}
    where the first equality follows by the fact that $\cost_{t}[y_{t}, y_{t}] = 0$ and $||\cost_{t}[y_{t}]||_{1}=1$. Thus the weak learners indeed satisfy the online weak learning condition with edge $\gamma$ and excess loss $S$. Now suppose a booster imposes weights on weak learners by $\alpha^{i}$. WLOG, we may assume the weights are normalized such that $\sum_{i=1}^{N}\alpha^{i}=1$. Adopting the argument of \citet[Section 13.2.6]{schapire2012boosting}, we prove that the optimal choice of weights is $(\frac{1}{N}, \cdots, \frac{1}{N})$. Fix $t$, and let $l^{i}$ denote the prediction made by $WL^{i}$. By noting that $\prob(y_{t} = y) = \frac{1}{k}$, which is constant, we can deduce
    \begin{align*}
    	\prob(y_{t} = y | l^{1}, \cdots, l^{N}) & = \frac{\prob(l^{1}, \cdots, l^{N}|y_{t}=y) \prob(y_{t}=y)}{\prob(l^{1}, \cdots, l^{N})}\\
	&\propto \prob(l^{1}, \cdots, l^{N}|y_{t}=y) \\
	&= \prod_{i=1}^{N}p^{\ind(l^{i}=y)}q^{\ind(l^{i}\neq y)},
    \end{align*}
    where $f \propto g$ means $f(y)/g(y)$ does not depend on $y$, $p = \unifv^{y_{t}}_{2\gamma}[y_{t}] = \frac{1-2\gamma}{k}+2\gamma$, and $q = \unifv^{y_{t}}_{2\gamma}[l] = \frac{1-2\gamma}{k}$. By taking log, we get 
    \begin{align*}
    	\log \prob(y_{t} = y | l^{1}, \cdots, l^{N}) &= C + \log p \sum_{i=1}^{N} \ind(l^{i}=y) + \log q \sum_{i=1}^{N} \ind(l^{i}\neq y) \\
	&= C + N \log q + \log \frac{p}{q} \sum_{i=1}^{N} \ind(l^{i}=y) .
    \end{align*}
    Therefore, the optimal decision after observing $l^{1}, \cdots, l^{N}$ is to choose $y$ that maximizes $\sum_{i=1}^{N} \ind(l^{i}=y)$, or equivalently, to take simple majority votes. 
        
    To compute a lower bound for the error rate, we again introduce random draw framework as in the proof of Lemma \ref{lemma:asymptoticError}. WLOG,  we may assume that the true label is $1$. Let $A_{i}$ denote the event that the number $i$ beats $1$ in the majority votes. Then we have 
    \begin{equation}
    	\label{eq:optimality5}
    	\prob(\text{booster makes error}) \geq \prob(A_{2}).
    \end{equation}
    Now we need a lower bound for $\prob(A_{2})$. To do so, let $\{Y_{i}\}$ be the series of i.i.d. random variables such that $Y_{i} \in \{-1, 0, 1\}$ and  
    \begin{align*}
    	\prob(Y_{j} = 1) &= \frac{1-2\gamma}{k} + 2\gamma=:p_{1}, \\
    	 \prob(Y_{j} = -1) &=\frac{1-2\gamma}{k}=:p_{-1}.			
    \end{align*}
    Then $\prob(A_{2}) = \prob(Y < 0)$ where $Y := \sum_{i=1}^{N}Y_{i}$. 
    
    Now let $M$ be the number of $j$ such that $Y_{j} \neq 0$. By conditioning on $M$, we can write 
    \begin{equation*}
	\label{eq:optimality7}
    	\prob(Y < 0 | M=m) = \prob(B \leq \frac{m}{2}),
    \end{equation*}
    where $B \sim binom(m, \frac{p_{1}}{p_{1}+ p_{-1}})$. By Slud's inequality \cite[Theorem 2.1]{slud1977distribution}, we have 
    \begin{equation*}
    	\prob(B \leq \frac{m}{2}) \geq \prob(Z \geq \sqrt{m} \frac{p-\frac{1}{2}}{\sqrt{p(1-p)}}),
    \end{equation*}
    where $Z$ follows a standard normal distribution and $p = \frac{p_{1}}{p_{1}+p_{-1}}$. Now using tail bound on normal distribution, we get 
    \begin{align}
    	\label{eq:optimality8}
	\begin{split}
		\prob(B \leq \frac{m}{2}) &\geq \Omega(\exp(-\frac{m(p-1/2)^{2}}{p(1-p)}) )\\
		&= \Omega (\exp(- \frac{m(p_{1} - p_{-1})^{2}}{4p_{1}p_{-1}}) )\\
		&= \Omega (\exp(-\frac{m\gamma^{2}}{p_{1}p_{-1}}))\\
		&\geq \Omega (\exp(-4m k^{2} \gamma^{2}))\\
		&\geq \Omega (\exp(-4N k^{2} \gamma^{2})).
	\end{split}
    \end{align}
    We intentionally drop $\frac{1}{2}$ from the power, which makes the bound smaller. The second inequality holds because $p_{1}p_{-1} \geq \frac{(1-2\gamma)^{2}}{k^{2}} \geq \frac{1}{4k^{2}}$. Integrating w.r.t. $m$ gives 
    \begin{equation*}
    	\prob(\text{booster makes error}) \geq \prob(Y < 0) \geq \Omega (\exp(-4N k^{2} \gamma^{2})).
    \end{equation*}
    By setting this value equal to $\epsilon$, we have $N \geq \Omega(\frac{1}{k^{2}\gamma^{2}}\ln \frac{1}{\epsilon})$, which proves the first part of the theorem. 
    
    Now we turn our attention to the optimality of sample complexity. Let $T_{0}:=\frac{kS}{4\gamma}$ and define $\probv_{t} = \unifv^{y_{t}}_{0}$ for $t \leq T_{0}$ and $\probv_{t} = \unifv^{y_{t}}_{2\gamma}$ for $t > T_{0}$. Then for $T \leq T_{0}$, (\ref{eq:optimality1}) implies 
    \begin{equation}
    	\label{eq:optimality2}
    	\sum_{t=1}^{T}w_{t}\cost_{t}[y_{t}, \predy_{t}] 
	\leq \frac{1+\gamma}{k}||\weightv||_{1} + \frac{k \ln (\frac{1}{\delta})}{2\gamma} 
	\leq \frac{1-\gamma}{k}||\weightv||_{1} +S, 
    \end{equation}
    where the last inequality holds because $||\weightv||_{1} \leq T_{0} = \frac{kS}{4\gamma}$. For $ T > T_{0}$, again (\ref{eq:optimality1}) implies 
    \begin{align}
    	\begin{split}
    	\label{eq:optimality3}
    	\sum_{t=1}^{T}w_{t}\cost_{t}[y_{t}, \predy_{t}] 
	&\leq \frac{1}{k}\sum_{t=1}^{T_{0}}w_{t} + \frac{1-2\gamma}{k}\sum_{t=T_{0}+1}^{T}w_{t} 
	+ \frac{\gamma ||\weightv||_{1}}{k} + \frac{k \ln(\frac{1}{\delta})}{2\gamma} \\
    	&\leq \frac{2\gamma}{k}T_{0} + \frac{1-\gamma}{k}||\weightv||_{1} + \frac{k \ln(\frac{1}{\delta})}{2\gamma}  \\
    	&\leq \frac{1-\gamma}{k}||\weightv||_{1} + S.
    	\end{split}
    \end{align}
    (\ref{eq:optimality2}) and (\ref{eq:optimality3}) prove that the weak learners indeed satisfy (\ref{eq:onlineWLC}). Now note that combining weak learners does not provide meaningful information for $t \leq T_{0}$, and thus any online boosting algorithm has errors at least $\Omega(T_{0})$. Therefore to get the desired asymptotic error rate, the number of observations $T$ should be at least $\Omega(\frac{T_{0}}{\epsilon}) = \Omega(\frac{k}{\epsilon \gamma}S)$, which proves the second part of the theorem. 
\end{proof}

\begin{figure}[t!]
	\centering
	\begin{tabular}{cc}
		\includegraphics[width=0.4\textwidth]{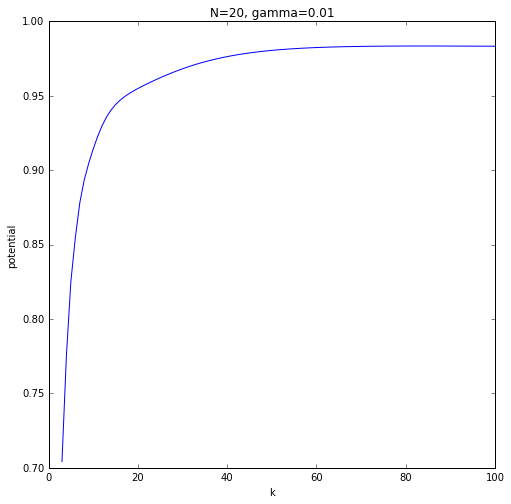} & \includegraphics[width=0.4\textwidth]{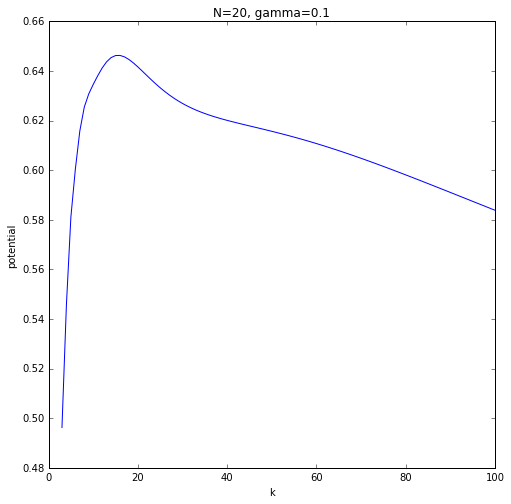}
	\end{tabular}
	\caption{Plot of $\potential^{1}_{N}(\textbf{0})$ computed with distribution $\unifv^{1}_{\gamma}$ versus the number of labels $k$. $N$ is fixed to be 20, and the edge $\gamma$ is set to be 0.01 (left) and 0.1 (right). The graph is not monotonic for larger edge. This hinders the approximation of potential functions with respect to $k$.}
	\label{fig:potential}
\end{figure}

Even though the gap for the number of weak learners between Corollary \ref{corollary:mistakeOptimal} and Theorem \ref{thm:optimality} is merely polynomial in $k$, readers might think it is counter-intuitive that $N$ is increasing in $k$ in the upper bound while decreasing in the lower bound. This phenomenon occurs due to the difficulty in approximating potential functions. Recall that Lemma \ref{lemma:asymptoticError} and Theorem \ref{thm:optimality} utilize upper and lower bound of $\potential^{1}_{N}(\textbf{0})$.

At first glance, considering that $\potential^{1}_{N}(\textbf{0})$ implies the error rate of majority votes out of $N$ independent random draws with distribution $\unifv^{1}_{\gamma}$, the potential function seems to be increasing in $k$ as the task gets harder with bigger set of options. This is the case of left panel of Figure \ref{fig:potential}. However, as it is shown in the right panel, it can also start decreasing in $k$ when $\gamma$ is larger. This can happen because the probability that a wrong label is drawn vanishes as $k$ grows while the probability that the correct label is drawn remains bigger than $\gamma$. In this regard, even though the number of wrong labels gets larger, the error rate actually decreases as $\unifv^{1}_{\gamma}[1]$ dominates other probabilities. 

After acknowledging that $\potential^{1}_{N}(\textbf{0})$ might not be a monotonic function of $k$, the linear upper bound (\ref{eq:asymptoticError}) turns out to be quite naive, and this is the main reason for the conflicting dependence on $k$ in upper bound and lower bound for $N$. As the relation among $k$, $N$, and $\gamma$ in $\potential^{1}_{N}(\textbf{0})$ is quite intricate, the issue of deriving better approximation of potential functions remains open.

\section{Proof of Theorem \ref{thm:mistakeAdaptive}}
\label{appendix:mistakeAdaptive}
We first introduce a lemma that will be used in the proof. 
\begin{lemma}
\label{lemma:1}
Suppose $A, B \geq 0$, $B-A = \gamma \in [-1, 1]$, and $A+B \leq 1$. Then we have 
\begin{equation*}
\min_{\alpha \in [-2, 2]} A(e^{\alpha}-1) + B(e^{-\alpha}-1) \leq -\frac{\gamma^{2}}{2}.
\end{equation*}
\end{lemma}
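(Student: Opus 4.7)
The plan is to avoid all case analysis by verifying the bound at a single explicit point, namely $\alpha = \gamma$, which automatically lies in $[-2,2]$ since $|\gamma|\le 1$. Introducing $s := A+B \in [|\gamma|,1]$ and using $A=(s-\gamma)/2$, $B=(s+\gamma)/2$, I would first rewrite the objective in hyperbolic form,
\[
  f(\alpha) \;:=\; A(e^{\alpha}-1) + B(e^{-\alpha}-1) \;=\; s(\cosh\alpha - 1) - \gamma\sinh\alpha,
\]
and then specialize to $\alpha = \gamma$, reducing the lemma to the single scalar inequality $s(\cosh\gamma - 1) - \gamma\sinh\gamma \le -\gamma^{2}/2$.

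This scalar inequality I would chain through two elementary facts, each valid for every real $\gamma$: namely $\cosh\gamma - 1 \le \tfrac{1}{2}\gamma\sinh\gamma$ and $\gamma\sinh\gamma \ge \gamma^{2}$. Granting these, and using $s\le 1$ together with $\gamma\sinh\gamma \ge 0$, one immediately gets
\[
  f(\gamma) \;\le\; \tfrac{s}{2}\gamma\sinh\gamma - \gamma\sinh\gamma \;=\; \bigl(\tfrac{s}{2}-1\bigr)\gamma\sinh\gamma \;\le\; -\tfrac{1}{2}\gamma\sinh\gamma \;\le\; -\tfrac{\gamma^{2}}{2},
\]
which is exactly the claim.

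The second inequality is just $\sinh x \ge x$ for $x \ge 0$, extended to $\gamma<0$ by odd symmetry (so that $\gamma\sinh\gamma = |\gamma|\sinh|\gamma| \ge \gamma^{2}$). The first inequality, $\cosh\gamma - 1 \le \tfrac{1}{2}\gamma\sinh\gamma$, is the only step that is not immediate and the main obstacle: I would prove it either by comparing Taylor coefficients term-by-term (each coefficient on the right dominates the corresponding one on the left by a factor of $n+1$), or equivalently by substituting the double-angle identities $\cosh\gamma - 1 = 2\sinh^{2}(\gamma/2)$ and $\sinh\gamma = 2\sinh(\gamma/2)\cosh(\gamma/2)$, cancelling the common factor $\sinh(\gamma/2)$, and reducing to the textbook bound $|\tanh x| \le |x|$. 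Apart from tracking the sign of $\sinh(\gamma/2)$ when $\gamma$ is negative, no further subtleties arise, and in particular no optimization over $\alpha$ or case split on the sizes of $A$ and $B$ is required.
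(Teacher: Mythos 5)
Your proof is correct, and it takes a genuinely different route from the paper's. The paper computes the unconstrained minimizer $\alpha^\ast = \tfrac{1}{2}\log\tfrac{B}{A}$ and must therefore split into three cases according to whether $\alpha^\ast$ lands inside $[-2,2]$ (handling the two overflow regimes with the ad hoc test points $\alpha=\pm\log 6$); in the central case it obtains the sharper bound $-\gamma^{2}/(2(A+B))$ via $2\sqrt{AB}-(A+B)=-(\sqrt A-\sqrt B)^{2}$. You instead evaluate at the single feasible point $\alpha=\gamma\in[-1,1]\subseteq[-2,2]$, which after the substitution $A=(s-\gamma)/2$, $B=(s+\gamma)/2$ reduces everything to the scalar inequality $s(\cosh\gamma-1)-\gamma\sinh\gamma\le-\gamma^{2}/2$, settled by $\cosh\gamma-1\le\tfrac12\gamma\sinh\gamma$ (equivalent to $\tanh x\le x$) and $\gamma\sinh\gamma\ge\gamma^{2}$, together with $0\le s\le 1$. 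All steps check out: both auxiliary inequalities hold for every real $\gamma$ since both sides are even with nonnegative Taylor coefficients, the multiplication of the first by $s\ge 0$ is legitimate, and the final step uses $\tfrac{s}{2}-1\le-\tfrac12$ with $\gamma\sinh\gamma\ge 0$. What your approach buys is the complete elimination of the case analysis and of any optimization over $\alpha$ (the constraint $\alpha\in[-2,2]$ becomes irrelevant since $|\gamma|\le 1$); what it gives up is the tighter intermediate bound $-\gamma^{2}/(2(A+B))$ available in the paper's main case, but that extra strength is not used anywhere, so nothing downstream is affected.
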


\begin{proof}
We divide into three cases with respect to the range of $\frac{B}{A}$. 

First suppose $e^{-4} \leq \frac{B}{A} \leq e^4$. In this case, the minimum is attained at $\alpha = \frac{1}{2}\log \frac{B}{A}$, and the minimum becomes 
\begin{align*}
    -(A+B) + 2 \sqrt{AB} &= - (\sqrt{A} - \sqrt{B})^2 \\
    &= -(\frac{A-B}{\sqrt A + \sqrt B})^2 \\
    &= - \frac{\gamma^2}{(\sqrt A + \sqrt B)^2} \\ 
    &\leq - \frac{\gamma^2}{2(A+B)} \leq -\frac{\gamma^2}{2}.
\end{align*}

Now suppose $\frac{B}{A} > e^4 > 51$. From $B-A = \gamma$, we have $\gamma > 50A\geq 0$. Choosing $\alpha = \log 6$, we get the minimum is bounded above by 
\begin{align*}
    5A - \frac{5}{6}B &= \frac{25}{6}A - \frac{5}{6}\gamma \\
    &< \frac{25}{6} \frac{\gamma}{50} - \frac{5}{6}\gamma \\
    &= -\frac{3}{4} \gamma < -\frac{\gamma^2}{2}.
\end{align*}
The last inequality hold due to $\gamma \leq 1$. 

Finally suppose $\frac{A}{B} > e^4 > 51$. From $B-A = \gamma$, we have $-\gamma > 50B \geq 0$. Choosing $\alpha = -\log 6$, we get the minimum is bounded above by 
\begin{align*}
    -\frac{5}{6}A + 5B &= \frac{25}{6}B + \frac{5}{6}\gamma \\
    &< -\frac{25}{6}\frac{\gamma}{50} + \frac{5}{6}\gamma \\
    & = \frac{3}{4}\gamma < -\frac{\gamma^2}{2}.
\end{align*}
The last inequality hold due to $\gamma \geq -1$. This completes the proof. 
\end{proof}

Now we provide a proof of Theorem \ref{thm:mistakeAdaptive}.

\begin{proof}
Let $M_{i}$ denote the number of mistakes made by expert $i$: $M_{i} = \sum_{t} \ind (y_{t} \neq \predy^{i}_{t})$. We also let $M_{0} = T$ for the ease of presentation. As Adaboost.OLM is using the Hedge algorithm among $N$ experts, the Azuma-Hoeffding inequality and a standard analysis (cf. \citet[Corollary 2.3]{cesa2006prediction}) provide with probability $1-\delta$, 
\begin{equation}
\label{eq:main1}
\sum_{t} \ind(y_{t} \neq \predy_{t}) \leq 2 \min_{i} M_{i} + 2 \log N + \tilde O (\sqrt T),
\end{equation}
where $\tilde O$ notation suppresses dependence on $\log \frac{1}{\delta}$.

Now suppose the expert $i-1$ makes a mistake at iteration $t$. That is to say, in a conservative way, $\cumv^{i-1}_{t}[y_{t}] \leq \cumv^{i-1}_{t}[l]$ for some $l \neq y_{t}$. This implies that among $k-1$ terms in the summation of $-\cost^{i}_{t}[y_{t}, y_{t}]$ in (\ref{eq:costLogistic}), at least one term is not less than $\frac{1}{2}$. Thus we can say $-\cost^{i}_{t}[y_{t}, y_{t}] \geq \frac{1}{2}$ if the expert $i-1$ makes a mistake at $\samplex_{t}$. This leads to the inequality:
\begin{equation}
\label{eq:main2}
-\sum_{t}\cost^{i}_{t}[y_{t}, y_{t}] \geq \frac{M_{i-1}}{2}.
\end{equation}
Note that by definition of $M_{0}$ and $\cost^{1}_{t}$, the above inequality holds for $i=1$ as well.  For ease of notation, let us write $w^{i} :=-\sum_{t}\cost^{i}_{t}[y_{t}, y_{t}]$. 

Now let $\Delta_{i}$ denote the difference of the cumulative logistic loss between two consecutive experts: 
\begin{equation*}
\Delta_{i} = \sum_{t} L^{y_{t}}(\cumv^{i}_{t}) - L^{y_{t}}(\cumv^{i-1}_{t}) = \sum_{t}L^{y_{t}}(\cumv^{i-1}_{t} + \alpha^{i}_{t}\stdv_{l^{i}_{t}}) - L^{y_{t}}(\cumv^{i-1}_{t}). 
\end{equation*}
Then Online Gradient Descent algorithm provides 
\begin{equation}
\label{eq:main3}
\Delta_{i} \leq \min_{\alpha \in [-2, 2]} \sum_{t} [L^{y_{t}}(\cumv^{i-1}_{t} + \alpha \stdv_{l^{i}_{t}}) - L^{y_{t}}(\cumv^{i-1}_{t})]  + 4\sqrt 2 (k-1) \sqrt T. 
\end{equation}

By simple algebra, we can check 
\begin{equation*}
\log(1+e^{s+\alpha}) - \log(1+e^s) = \log(1 + \frac{e^{\alpha}-1}{1+ e^{-s}}) \leq \frac{1}{1 + e^{-s}} (e^\alpha - 1).
\end{equation*}
From this, we can deduce that
\begin{align*}
L^{y_t} (\cumv^{i-1}_t + \alpha \stdv_{l^{i}_t}) - L^{y_t}(\cumv^{i-1}_t) \leq 
\begin{cases}
\cost^{i}_t [y_t, l^{i}_t] (e^\alpha -1)&, \text{if } l^{i}_t \neq y_t \\
\cost^{i}_t [y_t, l^{i}_t] (-e^{-\alpha} +1)&, \text{if } l^{i}_t = y_t \\
\end{cases}.
\end{align*}
Summing over $t$, we have 
\begin{equation*}
\sum_{t}L^{y_t} (\cumv^{i-1}_t + \alpha \stdv_{l^{i}_t}) - L^{y_t}(\cumv^{i-1}_t)  \leq w^{i} (A (e^\alpha -1) +B (e^{-\alpha} -1)),
\end{equation*}
where 
\begin{equation*}
 A = \sum_{l_t \neq y_t} \cost_t [y_t, l_t] / w^{i},~ B = -\sum_{l_t = y_t} \cost_t [y_t, l_t] / w^{i}.
\end{equation*}
Note that $A$ and $B$ are non-negative and $B-A = \gamma_{i} \in [-1, 1], ~ A+B \leq 1$. Lemma \ref{lemma:1} provides
\begin{equation}
\label{eq:main5}
\min_{\alpha \in [-2, 2]} \sum_{t} [L^{y_{t}}(\cumv^{i-1}_{t} + \alpha \stdv_{l^{i}_{t}}) - L^{y_{t}}(\cumv^{i-1}_{t})] \leq -\frac{{\gamma_{i}}^{2}}{2} w^{i}.
\end{equation}

Combining (\ref{eq:main2}), (\ref{eq:main3}), and (\ref{eq:main5}), we have

\begin{equation*}
\Delta_{i} \leq -\frac{{\gamma_{i}}^{2}}{4} M_{i-1} + 4\sqrt 2 (k-1) \sqrt T.
\end{equation*}

Summing over $i$, we get by telescoping rule
\begin{align*}
\sum_{t} L^{y_{t}}(\cumv^{N}_{t}) - \sum_{t} L^{y_{t}}(\zerov) &\leq -\frac{1}{4}\sum_{i} \gamma_{i}^{2} M_{i-1} + 4\sqrt 2 (k-1) N\sqrt T \\
&\leq -\frac{1}{4}\sum_{i} \gamma_{i}^{2} \min_{i} M_{i} + 4\sqrt 2 (k-1)N \sqrt T.
\end{align*}
Note that $L^{y_{t}}(\zerov) = (k-1) \log 2$ and $L^{y_{t}}(\cumv^{N}_{t}) \geq 0$. Therefore we have
\begin{equation*}
\min_{i}M_{i} \leq  \frac{4 (k-1) \log 2}{\sum_{i}\gamma_{i}^{2}}T + \frac{16\sqrt 2 (k-1)N}{\sum_{i}\gamma_{i}^{2}} \sqrt T.
\end{equation*}
Plugging this in (\ref{eq:main1}), we get with probability $1-\delta$,
\begin{align*}
\sum_{t} \ind(y_{t} \neq \predy_{t}) &\leq \frac{8 (k-1) \log 2}{\sum_{i}\gamma_{i}^{2}}T + \tilde O (\frac{kN \sqrt T}{\sum_{i}\gamma_{i}^{2}} + \log N) \\
&\leq \frac{8 (k-1)}{\sum_{i}\gamma_{i}^{2}}T + \tilde O (\frac{kN^{2}}{\sum_{i}\gamma_{i}^{2}}),
\end{align*}
where the last inequality holds from AM-GM inequality: $cN\sqrt T \leq \frac{c^{2}N^{2} + T}{2}$. 

\end{proof}

\section{Adaptive algorithms with different surrogate losses}
\label{appendix:differentLosses}
In this section, we present similar adaptive boosting algorithms with Adaboost.OLM but with two different surrogate losses: exponential loss and square hinge loss. We keep the main structure, but the unique properties of each loss result in little difference in details. 

\subsection{Exponential loss}
As discussed in Section \ref{section:surrogate}, exponential loss is useful in batch setting because it provides a closed form for the potential function. We will use following multiclass version of exponential loss:
\begin{equation}
\label{eq:expLoss}
L^{r}(\cumv) := \sum_{l\neq r} \exp(\cumv[l] - \cumv[r]).
\end{equation}

From this, we can compute the cost matrix and ${f^{i}_{t}}'$ for the online gradient descent as below:

\begin{align}
\label{eq:expCost}
\cost^{i}_t[r,l] = 
    \begin{cases} 
     \exp(\cumv^{i-1}_t [l] - \cumv^{i-1}_t[r])&, \text{if } l\neq r\\
    -\sum_{j\neq r}\exp(\cumv^{i-1}_t [j] - \cumv^{i-1}_t[r])&, \text{if } l = r
    \end{cases}
\end{align}

\begin{align}
\label{eq:expGradient}
{f^i_t}'(\alpha) = 
    \begin{cases} 
    \exp(\cumv^{i-1}_t[l^i_t]+\alpha - \cumv^{i-1}_t [y_{t}])&, \text{if } l^i_t\neq y_t\\
    -\sum_{j\neq y_t}\exp(\cumv^{i-1}_t[j]-\alpha - \cumv^{i-1}_t [y_{t}])&, \text{if } l^i_t = y_t.
    \end{cases}
\end{align}
With this gradient, if we set the learning rate $\eta^{i}_{t} = \frac{2\sqrt 2}{(k-1)\sqrt t} e^{-i}$, a standard analysis provides $R^{i}(T) \leq 4\sqrt 2 (k-1) e^{i} \sqrt T$. Note that with exponential loss, we have different learning rate for each weak learner. We keep the algorithm same as Algorithm \ref{alg:AdaboostOLM}, but with different cost matrix and learning rate. Now we state the theorem for the mistake bound. 

\begin{theorem}{\bf{(Mistake bound with exponential loss)}}
\label{thm:exp}
For any $T$ and $N$, the number of mistakes made by Algorithm \ref{alg:AdaboostOLM} with above cost matrix and learning rate satisfies the following inequality with high probability: 
\begin{equation*}
\sum_{t} \ind(y_{t} \neq \predy_{t}) \leq \frac{4k}{\sum_{i}\gamma_{i}^{2}}T + \tilde O (\frac{ke^{2N}}{\sum_{i}\gamma_{i}^{2}}).
\end{equation*}
\end{theorem}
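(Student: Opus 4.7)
The plan is to adapt the proof of Theorem \ref{thm:mistakeAdaptive} to exponential loss; the Hedge--over--experts skeleton and the telescoping over $i$ are preserved, but two ingredients change: the lower bound on $w^i := -\sum_t \cost^i_t[y_t, y_t]$ in terms of expert mistakes, and the per-step loss-drop identity, which now becomes \emph{exact} thanks to the multiplicative structure of $\exp$.

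First, I would invoke verbatim the Hedge / Azuma--Hoeffding step from (\ref{eq:main1}), reducing the task to bounding $\min_i M_i$ where $M_i$ is the mistake count of expert $i$ and $M_0 := T$. Second, I would observe that if expert $i-1$ errs at round $t$ then some $l \neq y_t$ satisfies $\cumv^{i-1}_t[l] \geq \cumv^{i-1}_t[y_t]$, so
\begin{equation*}
L^{y_t}(\cumv^{i-1}_t) \;=\; \sum_{l \neq y_t} \exp\bigl(\cumv^{i-1}_t[l] - \cumv^{i-1}_t[y_t]\bigr) \;\geq\; 1,
\end{equation*}
i.e.\ $-\cost^i_t[y_t, y_t] \geq 1$, and summing over $t$ yields $w^i \geq M_{i-1}$ (the case $i=1$ follows from $L^{y_t}(\zerov) = k-1 \geq 1$).

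Third, I would use the identity $\exp(s+\alpha) - \exp(s) = \exp(s)(e^\alpha - 1)$ to obtain the \emph{exact} equality
\begin{equation*}
\sum_t\bigl[L^{y_t}(\cumv^{i-1}_t + \alpha \stdv_{l^i_t}) - L^{y_t}(\cumv^{i-1}_t)\bigr] \;=\; w^i\bigl[A(e^\alpha - 1) + B(e^{-\alpha} - 1)\bigr],
\end{equation*}
where $A$ is the share of $w^i$ coming from rounds with $l^i_t \neq y_t$ and $B$ the share from $l^i_t = y_t$; by the definition of the empirical edge, $A, B \geq 0$, $A + B = 1$, and $B - A = -\gamma_i$, so Lemma \ref{lemma:1} bounds the bracketed quantity by $-\gamma_i^2/2$ over $\alpha \in [-2, 2]$. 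Combining this with the stated OGD regret $R^i(T) \leq 4\sqrt{2}(k-1)e^i\sqrt{T}$ gives
\begin{equation*}
\Delta_i := \sum_t\bigl[L^{y_t}(\cumv^i_t) - L^{y_t}(\cumv^{i-1}_t)\bigr] \;\leq\; -\frac{\gamma_i^2}{2}\,M_{i-1} + 4\sqrt{2}\,(k-1)\,e^i\sqrt{T}.
\end{equation*}

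Finally, I would telescope over $i$: since $L^{y_t}(\cumv^N_t) \geq 0$ and $L^{y_t}(\zerov) = k-1$, summing gives
\begin{equation*}
\frac{\sum_i \gamma_i^2}{2}\,\min_i M_i \;\leq\; (k-1)T + 4\sqrt{2}(k-1)\sqrt{T}\sum_{i=1}^N e^i \;=\; O\bigl((k-1)T + (k-1)e^N\sqrt{T}\bigr),
\end{equation*}
and then plug this into the Hedge reduction and use AM--GM ($2e^N\sqrt{T} \leq e^{2N} + T$) to split the cross term into the leading $T$-part and an $e^{2N}$ residual, which yields the claimed $\frac{4k}{\sum_i \gamma_i^2}T + \tilde O\!\bigl(\frac{k e^{2N}}{\sum_i \gamma_i^2}\bigr)$. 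The main obstacle, and the reason the bound is qualitatively worse than in the logistic case, is the exponential blow-up of the gradients of $L^{y_t}$ as $i$ grows: the $e^{-i}$ scaling of the learning rate is tuned precisely to keep $R^i(T)$ at scale $e^i\sqrt{T}$, and carefully tracking that factor through the telescoping and the final AM--GM is the only delicate bookkeeping.
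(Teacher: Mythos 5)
Your proposal follows the paper's proof essentially verbatim: the same Hedge/Azuma reduction, the bound $w^i \geq M_{i-1}$ obtained from the exponential cost matrix, the exact loss-drop identity feeding Lemma \ref{lemma:1}, the $4\sqrt{2}(k-1)e^i\sqrt{T}$ regret from the $e^{-i}$-scaled learning rate, and the telescoping plus AM--GM finish. The only (immaterial) slips are that $A+B \leq 1$ rather than $A+B=1$ (for rounds with $l^i_t \neq y_t$ only the single term $\cost^i_t[y_t, l^i_t]$ enters $A$, while all $k-1$ terms enter $w^i$) and that $B-A = +\gamma_i$ rather than $-\gamma_i$; neither affects the application of Lemma \ref{lemma:1}, which uses only $(B-A)^2$ and $A+B\leq 1$.
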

\begin{proof}
The proof is almost identical to that of Theorem \ref{thm:mistakeAdaptive}, and we only state the different steps. With cost matrix defined in (\ref{eq:expCost}), we can show 
\begin{equation*}
-\sum_{t} \cost^{i}_{t}[y_{t}, y_{t}] \geq M_{i-1}.
\end{equation*}
Furthermore, we have following identity (which was inequality in the original proof):
\begin{align*}
L^{y_t} (\cumv^{i-1}_t + \alpha \stdv_{l^{i}_t}) - L^{y_t}(\cumv^{i-1}_t) =
\begin{cases}
\cost^{i}_t [y_t, l^{i}_t] (e^\alpha -1)&, \text{if } l^{i}_t \neq y_t \\
\cost^{i}_t [y_t, l^{i}_t] (-e^{-\alpha} +1)&, \text{if } l^{i}_t = y_t \\
\end{cases}.
\end{align*}
This leads to
\begin{equation*}
\Delta_{i} \leq -\frac{{\gamma_{i}}^{2}}{2} M_{i-1} + 4\sqrt 2 (k-1) e^{i} \sqrt T.
\end{equation*}
Summing over $i$, we get 
\begin{align*}
\frac{\sum_{i} \gamma_{i}^{2}}{2} \min_{i}M_{i} &\leq (k-1)T + 4\sqrt 2 (k-1) e \frac{e^{N}-1}{e-1} \sqrt T \\
&\leq (k-1)T + 9k e^{N}\sqrt T.
\end{align*}
Plugging this in (\ref{eq:main1}), we get with high probability,
\begin{align*}
\sum_{t} \ind(y_{t} \neq \predy_{t}) &\leq \frac{4 (k-1)}{\sum_{i}\gamma_{i}^{2}}T + \tilde O (\frac{ke^{N} \sqrt T}{\sum_{i}\gamma_{i}^{2}} + \log N) \\
&\leq \frac{4 k}{\sum_{i}\gamma_{i}^{2}}T + \tilde O (\frac{ke^{2N}}{\sum_{i}\gamma_{i}^{2}}),
\end{align*}
which completes the proof. We also used AM-GM inequality for the last step.
\end{proof}

Comparing to Theorem \ref{thm:mistakeAdaptive}, we get a better coefficient for the first term, which is asymptotic error rate, but the exponential function in the second term makes the bound significantly loose. The exponential term comes from the larger variability of $f^{i}_{t}$ associated with exponential loss. It should also be noted that the empirical edge $\gamma_{i}$ is measured with different cost matrices, and thus direct comparison is not fair. In fact, as discussed in Section \ref{section:surrogate}, $\gamma_{i}$ is closer to $0$ with exponential loss than with logistic loss due to larger variation in weights, which is another huge advantage of logistic loss. 

\subsection{Square hinge loss}
Another popular surrogate loss is square hinge loss. We begin the section by introducing multiclass version of it:
\begin{equation}
\label{eq:squareLoss}
L^{r}(\cumv) := \frac{1}{2}\sum_{l\neq r} (\cumv[l] - \cumv[r]+1)^{2}_{+},
\end{equation}
where $f_{+} := \max\{0, f\}$. From this, we can compute the cost matrix and ${f^{i}_{t}}'$ for the online gradient descent as below:

\begin{align}
\label{eq:squareCost}
\cost^{i}_t[r,l] = 
    \begin{cases} 
     (\cumv^{i-1}_t [l] - \cumv^{i-1}_t[r]+1)_{+}&, \text{if } l\neq r\\
    -\sum_{j\neq r}(\cumv^{i-1}_t [j] - \cumv^{i-1}_t[r]+1)_{+}&, \text{if } l = r
    \end{cases}
\end{align}

\begin{align}
\label{eq:squareGradient}
{f^i_t}'(\alpha) = 
    \begin{cases} 
    (\cumv^{i-1}_t [l^{i}_{t}] + \alpha - \cumv^{i-1}_t[y_{t}]+1)_{+}&, \text{if } l^i_t\neq y_t\\
    -\sum_{j\neq y_t}(\cumv^{i-1}_t [j] - \alpha - \cumv^{i-1}_t[y_{t}]+1)_{+}&, \text{if } l^i_t = y_t.
    \end{cases}
\end{align}

With square hinge loss, we do not use Lemma \ref{lemma:1} in the proof of mistake bound, and thus the feasible set $F$ can be narrower. In fact, we will set $F = [-c, c]$, where the parameter $c$ will be optimized later. With this $F$, we have $|{f^i_t}'(\alpha)| \leq (k-1) + ci\leq (k-1) + cN$, and the standard analysis of online gradient descent algorithm with learning rate $\eta_{t} = \frac{\sqrt 2 c}{((k-1)+cN)\sqrt t}$ provides that $R^{i}(T) \leq 2\sqrt 2 (k-1 + cN)\sqrt T$. Now we are ready to prove the mistake bound. 

\begin{theorem}{\bf{(Mistake bound with square hinge loss)}} 
\label{thm:square}
For any $T$ and $N$, with the choice of $c = \frac{1}{\sqrt N}$, the number of mistakes made by Algorithm \ref{alg:AdaboostOLM} with above cost matrix and learning rate satisfies the following inequality with high probability:
\begin{equation*}
\sum_{t} \ind(y_{t} \neq \predy_{t}) \leq \frac{2k \sqrt N}{\sum_{i}|\gamma_{i}|}T + \tilde O (\frac{(k^{2} + N)N \sqrt N}{\sum_{i}|\gamma_{i}|}).
\end{equation*}
\end{theorem}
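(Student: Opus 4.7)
The plan is to closely follow the proof of Theorem~\ref{thm:mistakeAdaptive}, swapping in ingredients adapted to the square hinge loss. Let $M_{i} := \sum_{t}\ind(y_{t} \neq \predy^{i}_{t})$ for $i \geq 1$ and $M_{0} := T$. The Hedge step at the opening of that proof carries over verbatim and gives, with probability $1-\delta$,
\[
\sum_{t}\ind(y_{t} \neq \predy_{t}) \leq 2 \min_{1 \leq i \leq N} M_{i} + 2\log N + \tilde O(\sqrt T),
\]
so it suffices to control $\min_{i} M_{i}$.

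First I would reprove the ``mistake forces large cost'' step in the new cost matrix. If expert $i-1$ errs at round $t$, then there is some $l \neq y_{t}$ with $\cumv^{i-1}_{t}[l] \geq \cumv^{i-1}_{t}[y_{t}]$, so the corresponding summand of $-\cost^{i}_{t}[y_{t},y_{t}]$ in \eqref{eq:squareCost} is at least $1$; summing over $t$ yields $w^{i} := -\sum_{t}\cost^{i}_{t}[y_{t},y_{t}] \geq M_{i-1}$. Next, since $f(x)=\tfrac{1}{2}(x)_{+}^{2}$ is globally $1$-smooth (a short case check shows $f(x+h) \leq f(x) + h f'(x) + h^{2}/2$ for all $x,h$), a single-coordinate Taylor step gives, uniformly in $l^{i}_{t}\in[k]$,
\[
L^{y_{t}}(\cumv^{i-1}_{t} + \alpha \stdv_{l^{i}_{t}}) - L^{y_{t}}(\cumv^{i-1}_{t}) \leq \alpha\, \cost^{i}_{t}[y_{t}, l^{i}_{t}] + \frac{(k-1)\alpha^{2}}{2},
\]
where the $(k-1)$ quadratic factor is tight only when $l^{i}_{t} = y_{t}$ (all $k-1$ summands shift simultaneously) and trivially upper-bounds the case $l^{i}_{t} \neq y_{t}$ (a single summand shifts). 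Summing over $t$ and substituting $\sum_{t}\cost^{i}_{t}[y_{t}, l^{i}_{t}] = -\gamma_{i} w^{i}$, the minimum over $\alpha\in[-c,c]$ is obtained at $\alpha = c\cdot\mathrm{sign}(\gamma_{i})$, yielding the upper bound $-c|\gamma_{i}| w^{i} + \tfrac{(k-1)c^{2} T}{2}$.

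Combining this with the OGD regret $R^{i}(T) \leq 2\sqrt{2}(k-1+cN)\sqrt T$ stated just before the theorem, using $w^{i} \geq M_{i-1}$, and telescoping $\sum_{i}\Delta_{i} = \sum_{t}L^{y_{t}}(\cumv^{N}_{t}) - \sum_{t}L^{y_{t}}(\zerov)$ with $L^{y_{t}}(\cumv^{N}_{t}) \geq 0$ and $L^{y_{t}}(\zerov) = (k-1)/2$, I obtain
\[
c\sum_{i}|\gamma_{i}|\,M_{i-1} \leq \frac{(k-1)T\,(1 + Nc^{2})}{2} + 2\sqrt{2}\,N(k-1+cN)\sqrt T.
\]
Setting $c = 1/\sqrt N$ makes $1+Nc^{2}=2$, producing the advertised $\sqrt N$ scaling. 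Bounding $\sum_{i}|\gamma_{i}|M_{i-1} \geq (\sum_{i}|\gamma_{i}|) \min_{i}M_{i}$ (valid since $M_{0} = T \geq \min_{i}M_{i}$ and each $M_{j} \geq \min_{i}M_{i}$ for $j \geq 1$), substituting into the Hedge inequality, and finishing with an AM-GM step together with $(k+\sqrt N)^{2} \leq 2(k^{2}+N)$ to convert the residual $\sqrt T$ regret into a small $T$-piece absorbed into the leading coefficient plus a constant of order $\tilde O((k^{2}+N) N^{3/2}/\sum_{i}|\gamma_{i}|)$ then yields the theorem.

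The main obstacle is the smoothness step: one must verify that the Taylor-type bound with the stated $(k-1)$ quadratic factor holds in both cases $l^{i}_{t} = y_{t}$ and $l^{i}_{t} \neq y_{t}$, given that the square hinge loss is only piecewise smooth. The key input is the elementary inequality $\tfrac12 (x+h)_{+}^{2} \leq \tfrac12 (x)_{+}^{2} + h (x)_{+} + h^{2}/2$, which packages the global $1$-smoothness of $f$ into a clean one-line estimate requiring no differentiability at the kink. A secondary nuisance is tracking constants through the AM-GM balancing to land on the exact coefficient $2k\sqrt N$ of $T$ in the main term rather than a slightly larger multiple.
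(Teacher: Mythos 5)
Your proposal is correct and follows essentially the same route as the paper's proof: the same key inequality $\tfrac{1}{2}(s+\alpha)_{+}^{2}-\tfrac{1}{2}s_{+}^{2}\leq s_{+}\alpha+\alpha^{2}/2$, the same bound $w^{i}\geq M_{i-1}$, evaluation of the constrained minimum at $\alpha=\pm c$, telescoping with $L^{y_t}(\zerov)=(k-1)/2$, and the same AM-GM finish. The only nitpick is that the minimum over $[-c,c]$ is not necessarily \emph{attained} at $\alpha=c\,\mathrm{sign}(\gamma_{i})$ (the unconstrained minimizer may be interior), but since you only need an upper bound on that minimum, plugging in this point is valid, exactly as the paper does.
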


\begin{proof}
With cost matrix defined in (\ref{eq:squareCost}), we can show
\begin{equation*}
-\sum_{t} \cost^{i}_{t}[y_{t}, y_{t}] \geq M_{i-1}.
\end{equation*}

We can also check that 
\begin{equation*}
\frac{1}{2}[(s+\alpha)^{2}_{+} - s^{2}_{+}] \leq s_{+} \alpha + \frac{\alpha^{2}}{2},
\end{equation*}
by splitting the cases with the sign of each term. Using this, we can deduce that 

\begin{equation*}
L^{y_t} (\cumv^{i-1}_t + \alpha \stdv_{l^{i}_t}) - L^{y_t}(\cumv^{i-1}_t) \leq \cost^{i}_{t}[y_{t}, l^{i}_{t}] \alpha + \frac{(k-1)\alpha^{2}}{2}.
\end{equation*}

Summing over $t$ gives 
\begin{equation*}
\sum_{t}L^{y_t} (\cumv^{i-1}_t + \alpha \stdv_{l^{i}_t}) - L^{y_t}(\cumv^{i-1}_t) 
\leq \sum_{t} \cost^{i}_{t}[y_{t}, y_{t}] \gamma_{i} \alpha + \frac{(k-1)\alpha^{2}}{2}T.
\end{equation*}

The RHS is a quadratic in $\alpha$, and the minimizer is $\alpha^{*} = -\frac{\sum_{t} \cost^{i}_{t}[y_{t}, y_{t}] \gamma_{i}}{(k-1)T}$. Since the magnitude of $\cost^{i}_{t}[y_{t}, y_{t}]$ grows as a function of $c$, there is no guarantee that this minimizer lies in the feasible set $F = [-c, c]$. Instead, we will bound the minimum by plugging in $\alpha = \pm c$:
\begin{align*}
\min_{\alpha \in [-c, c]}\sum_{t}L^{y_t} (\cumv^{i-1}_t + \alpha \stdv_{l^{i}_t}) - L^{y_t}(\cumv^{i-1}_t) 
&\leq \frac{(k-1)c^{2}}{2} T + c|\gamma_{i}|\sum_{t} \cost^{i}_{t}[y_{t}, y_{t}] \\
&\leq \frac{(k-1)c^{2}}{2} T - c|\gamma_{i}|M_{i-1}.
\end{align*}

From this, we get 
\begin{equation*}
\Delta_{i} \leq -c|\gamma_{i}|M_{i-1} + \frac{(k-1)c^{2}}{2} T + 2\sqrt 2 (k-1 + cN)\sqrt T.
\end{equation*}

Summing over $i$, we get
\begin{align*}
c \sum_{i}|\gamma_{i}| \min_{i}M_{i} \leq \frac{k-1}{2}T + \frac{(k-1)c^{2}N}{2} T + 2\sqrt 2 (k-1 + cN)N\sqrt T.
\end{align*}
By rearranging terms, we conclude 
\begin{equation*}
\min_{i}M_{i} \leq \frac{(k-1)}{2 \sum_{i}|\gamma_{i}|} (\frac{1}{c} + cN)T + \frac{2\sqrt 2 (k-1 + cN)N}{\sum_{i}|\gamma_{i}|} \sqrt T. 
\end{equation*}
It is the first term from the RHS that provides an optimal choice of $c = \frac{1}{\sqrt N}$, and this value gives 
\begin{equation*}
\min_{i} M_{i} \leq \frac{(k-1) \sqrt N}{\sum_{i}|\gamma_{i}|}T + \frac{2\sqrt 2 (k-1 + \sqrt N)N}{\sum_{i}|\gamma_{i}|} \sqrt T.
\end{equation*}
Plugging this in (\ref{eq:main1}), we get with high probability,
\begin{align*}
\sum_{t} \ind(y_{t}\neq \predy_{t}) 
&\leq \frac{2(k-1) \sqrt N}{\sum_{i}|\gamma_{i}|}T + \tilde O (\frac{(k + \sqrt N)N}{\sum_{i}|\gamma_{i}|}\sqrt T + \log N) \\
&\leq \frac{2k \sqrt N}{\sum_{i}|\gamma_{i}|}T + \tilde O (\frac{(k^{2} + N)N \sqrt N}{\sum_{i}|\gamma_{i}|}),
\end{align*}
which completes the proof. We also used AM-GM inequality for the last step. 
\end{proof}

By Cauchy-Schwartz inequality, we have $N \sum_{i}\gamma_{i}^{2} \geq (\sum_{i}|\gamma_{i}|)^{2}$. From this, we can deduce $(\frac{\sqrt N}{\sum_{i}|\gamma_{i}|})^{2} \geq \frac{1}{\sum_{i}\gamma_{i}^{2}}$.
If LHS is greater than $1$, then the bound in Theorem \ref{thm:square} is meaningless. Otherwise, we have 
\begin{equation*}
\frac{\sqrt N}{\sum_{i}|\gamma_{i}|} 
\geq (\frac{\sqrt N}{\sum_{i}|\gamma_{i}|})^{2} 
\geq \frac{1}{\sum_{i}\gamma_{i}^{2}},
\end{equation*}
which validates that the bound with logistic loss is tighter. Furthermore, square hinge loss also produces more variable weights over instances, which results in worse empirical edges. 

\section{Detailed description of experiment}
\label{appendix:experiment}
Testing was performed on a variety of data sets described in Table \ref{tab:dataDetails}. All are from the UCI data repository (\citet{UCI1998}, \citet{mice2015}, \citet{ugulino2012wearable}) with a few adjustments made to deal with missing data and high dimensionality. These changes are noted in the table below. Many of the data sets are the same as used in the \citet{oza2005online}, with the addition of a few sets with larger numbers of data points and predictors. We report the average performance on both the entire data set and on the final 20\% of the data set. The two accuracy measures help understand both the ``burn in period'', or how quickly the algorithm improves as observations are recorded, and the ``accuracy plateau'', or how well the algorithm can perform given sufficient data. Different applications may emphasize each of these two algorithmic characteristics, so we choose to provide both to the reader. We also report average run times. All computations were carried out on a Nehalem architecture 10-core 2.27 GHz Intel Xeon E7-4860 processors with 25 GB RAM per core. For all but the last two data sets, results are averaged over 27 reordering of the data. Due to computational constraints, Movement was run just nine times and ISOLET just once. 

\begin{table}[h]
    \caption{Data set details}
    \label{tab:dataDetails}
    \centering
    \begin{tabular}{lccc}
   	\toprule
        Data sets & Number of data points & Number of predictors & Number of classes \\
        \midrule
        Balance &  625 & 4 & 3 \\ 
        Mice & 1080 & $82^\star$ & 8 \\
        Cars & 1728 & 6 & 4 \\
        Mushroom & 8124 & 22 & 2 \\
        Nursery & 12960 & 8 & 4 \\
        ISOLET & 7797 & $50^{\star \star}$ & 26 \\
        Movement & $165631^{\star \star \star}$ & $12^{\star \star \star}$ & 5 \\
        \bottomrule
        & & &\\
        \multicolumn{4}{l}{${}^\star$ Missing data was replaced with 0.}\\
        \multicolumn{4}{l}{${}^{\star \star}$ The original 617 predictors were projected onto their first 50 principal components,}\\
        \multicolumn{4}{l}{which contained 80\% of the variation.}\\
        \multicolumn{4}{l}{${}^{\star \star \star}$ User information was removed, leaving only sensor position predictors. Single data }\\
        \multicolumn{4}{l}{point with missing value removed. }
    \end{tabular}

\end{table}

In all the experiments we used Very Fast Decision Trees (VFDT) from \citet{domingos2000mining} as weak learners. VFDT has several tuning parameters which relate to the frequency with which the tree splits. In all methods we assigned these randomly for each tree. Specifically for our implementation the tuning parameter  \texttt{grace\_period} was chosen randomly between 5 and 20 and the tuning parameters \texttt{split\_confidence} and \texttt{hoeffding\_tie\_threshold} randomly between 0.01 and 0.9. It is likely that this procedure would produce trees which do not perform well on specific data sets. In practice for the Adaboost.OLM it is possible to restart poorly performing trees using parameters similar to better performing trees in an automated and online (although ad hoc) fashion using the $\alpha_t^i$, and this tends to produce superior performance (as well as allow adaptivity to changes in the data distribution). However for these experiments, we did not take advantage of this to better examine the benefits of just the cost matrix framework.

Several algorithms were tested using the above specifications, but with slightly different conditions. The first three are directly comparable since they all use the same weak learners and do not require knowledge of the edge of the weak learners. DT is the best result from running 100 VFDT independently. The best was chosen after seeing the performance on the entire data set and final 20\% respectively. However the time reported was the average time for running all 100 VFDT. This was done to better see the additional cost of running the boosting framework on top of the training of the raw weak learners. OLB is an implementation of the Online Boosting algorithm in \citet[Figure 2]{oza2005online} with 100 VFDT. AdaOLM stands for Adaboost.OLM, again with 100 VFDT.

The next five algorithms (MB) tested were all variants of the OnlineMBBM but with different edge $\gamma$ values. In practice this value is never known ahead of time, but we want to explore how different edges affect the performance of the algorithm. For the ease of computation, instead of exactly finding the value of (\ref{eq:potentialCompute}), we estimated the potential functions by Monte Carlo (MC) simulations. 

The final two algorithms are slightly different implementations of the One VS All (OvA) ensemble method. In this framework multiple binary classifiers are used to solve a multiclass problem by viewing different classes as the positive class, and all others as the negative class. They then predict whether a data point is their positive class or not, and the results are used together to make a final classification. Both use VFDT as their weak learners, but with $100 \times k$ binary trees. The first method (OvA) uses $k$ versions of Adaboost.OL, each viewing one of the classes as the positive class. Recall that Adaboost.OLM in the binary setting is just Adaboost.OL by \citet{beygelzimer2015optimal}. The second (AdaOVA) produces 100 weak multiclass classifiers by grouping a $k$ binary classifiers, one for each class, and then uses Adaboost.OLM to get the final learner, treating the 100 single tree OvA's as its weak learners. In the table below we have partitioned the methods in terms of the number of weak learners since, while they all tackle the same problem, algorithms within each partition are more directly comparable since they use the same weak learners.

\begin{table}[t!]
    \caption{Comparison of algorithms on final 20\% of data set}
    \label{tab:compareMethods8020}
    \centering
    \setlength\tabcolsep{2pt}
    \begin{tabular}{lcccccccccccc}
    	\toprule
	&&\multicolumn{8}{c}{$100$ multiclass trees}&&\multicolumn{2}{c}{$100 k$ binary trees}\\
	\cmidrule{3-10}
	\cmidrule{12-13}
        Data sets && DT & OLB & AdaOLM  & MB .3 & MB .1& MB .05 & MB .01 & MB .001 && OvA & AdaOVA\\
        \midrule
        Balance &&  0.768 & 0.772 & 0.754 & 0.788 & 0.821 & 0.819 & 0.805 & 0.752 && 0.786 & 0.795 \\ 
        Mice && 0.608 & 0.399 & 0.561 & 0.572 & 0.695 & 0.663 & 0.502 & 0.467 && 0.742  & 0.667\\
        Cars && 0.924 & 0.914 & 0.930 & 0.914 & 0.885 & 0.870 & 0.836 & 0.830 && 0.946 & 0.919\\
        Mushroom && 0.999 & 1.000 & 1.000 & 0.997 & 1.000 & 1.000 & 0.999 & 0.998 && 1.000 & 1.000\\
        Nursery && 0.953 & 0.941 & 0.966 & 0.965 & 0.969 & 0.964 & 0.948 & 0.940 && 0.974 & 0.965 \\
        \midrule
        ISOLET && 0.515 & 0.149 & 0.521 & 0.453 & 0.626 & 0.635 & 0.226 & 0.165 && 0.579 & 0.570 \\
        Movement && 0.915 & 0.870 & 0.962 & 0.975 & 0.987 & 0.988 & 0.984 & 0.981 && 0.947 & 0.970  \\
        \bottomrule
    \end{tabular}
\end{table}

\begin{table}[t!]
    \caption{Comparison of algorithms on full data set}
    \label{tab:compareMethodsAll}
    \centering
    \setlength\tabcolsep{2pt}
    \begin{tabular}{lcccccccccccc}
    	\toprule
        &&\multicolumn{8}{c}{$100$ multiclass trees}&&\multicolumn{2}{c}{$100 k$ binary trees}\\
	\cmidrule{3-10}
	\cmidrule{12-13}
        Data sets && DT & OLB & AdaOLM  & MB .3 & MB .1& MB .05 & MB .01 & MB .001 && OvA & AdaOVA\\
        \midrule
        Balance &&  0.734 & 0.747 & 0.698 & 0.751 & 0.769 & 0.759 & 0.736 & 0.677 && 0.724 & 0.730\\ 
        Mice && 0.499 & 0.315 & 0.454 & 0.457 & 0.507 & 0.449 & 0.356 & 0.343 && 0.586 & 0.530\\
        Cars && 0.848 & 0.839 & 0.865 & 0.842 & 0.829 & 0.814 & 0.767 & 0.762 && 0.881 & 0.853\\
        Mushroom && 0.996 & 0.997 & 0.995 & 0.991 & 0.995 & 0.994 & 0.993 & 0.992 && 0.996 & 0.995 \\
        Nursery && 0.921 & 0.909 & 0.928 & 0.932 & 0.936 & 0.932 & 0.918 & 0.912 && 0.939 & 0.932 \\
        \midrule
        ISOLET && 0.395 & 0.104 & 0.456 & 0.333 & 0.486 & 0.461 & 0.152 & 0.111 && 0.507 & 0.472 \\
        Movement && 0.898 & 0.864 & 0.942 & 0.954 & 0.972 & 0.973 & 0.959 & 0.957 && 0.927 & 0.952 \\
        \bottomrule
    \end{tabular}
\end{table}

\begin{table}[t!]
    \caption{Comparison of algorithms total run time in seconds}
    \label{tab:compareMethodsTimes}
    \centering
    \setlength\tabcolsep{2pt}
    \begin{tabular}{lcccccccccccc}
    	\toprule
        &&\multicolumn{8}{c}{$100$ multiclass trees}&&\multicolumn{2}{c}{$100 k$ binary trees}\\
	\cmidrule{3-10}
	\cmidrule{12-13}
        Data sets && DT & OLB & AdaOLM  & MB .3 & MB .1& MB .05 & MB .01 & MB .001 && OvA & AdaOVA\\
        \midrule
        Balance &&  8 & 19 & 20 & 26 & 42 & 47 & 50 & 51 && 66 & 43 \\ 
        Mice && 105 & 263 & 416 & 783 & 2173 & 3539 & 3579 & 3310 && 3092 & 3013\\
        Cars && 39 & 27 & 59 & 56 & 105 & 146 & 165 & 152 && 195 & 143 \\
        Mushroom && 241 & 169 & 355& 318 & 325 & 326 & 324 & 321  && 718 & 519 \\
        Nursery && 526 & 302 & 735 & 840 & 1510 & 2028 & 2181 & 1984 && 2995 & 1732 \\
        \midrule
        ISOLET && 470 & 1497 & 2422 & 18732 & 38907 & 64707 & 62492 & 50700 && 37300 & 33328 \\
        Movement && 1960 & 3437 & 5072 & 13018 & 17608 & 18676 & 16739 & 16023 && 30080 & 21389 \\
        \bottomrule
    \end{tabular}
\end{table}

\subsection{Analysis}

It is worth beginning by noting the strength of the VFDT without any boosting framework. While the results above are for the best performing tree in hindsight, which is not a valid strategy in practice, in many applications it would be possible to collect some data beforehand activating the system, and use that to pick tuning parameters. It is also worth noting that many of the weaknesses of the above methods, such as their poor scaling with the number of predictors, are also inherited from the VFDT. Nonetheless in almost all cases Adaboost.OLM algorithm outperforms both the best tree and the preexisting Online Boosting algorithm (and is often comparable to the OnlineMBBM algorithms), as well as provide theoretical guarantees. In particular these performance gains seem to be greater on the final 20\% of the data and in data sets with larger number of data points $n$, leading us to believe that Adaboost.OLM has a longer burn in period, but higher accuracy plateau. This performance does come at additional computational cost, but this cost is relatively mild, especially compared to the costs of OnlineMBBM and the OvA methods. 

The OnlineMBBM methods use additional assumptions about the power of their weak learners, and are able to leverage that additional information to produce more accurate, with one of these algorithms often achieving the highest accuracy on each data set. However they can be sensitive to the choice of $\gamma$, with the  worst choice of $\gamma$ often underperforming both pure trees and Adaboost.OLM, and with no single $\gamma$ value always producing the best result. These methods are also much slower than Adaboost.OLM, likely due to computational burden in estimating the potential functions. 

Finally our two OvA algorithms tend to perform very well, often beating the other adaptive methods. However this performance is likely due to the use of many times more weak learners than the other adaptive methods used, which results in high computational cost. Again we see that as $n$ increases the implementation of OvA using our cost matrix framework performs better compared to the vanilla implementation, reinforcing our belief that the cost matrix framework requires more data to come online but has a higher accuracy plateau.  
\end{appendices}

\end{document}